\documentclass{article}

\usepackage[preprint]{neurips_2026}

\usepackage[utf8]{inputenc}
\usepackage[T1]{fontenc}
\usepackage{amsmath,amssymb,amsfonts,amsthm}
\usepackage{algorithm}
\usepackage{algorithmic}
\usepackage{booktabs}
\usepackage{graphicx}
\usepackage[hidelinks]{hyperref}
\usepackage{xcolor}
\usepackage{multirow}
\usepackage{makecell}
\usepackage{subcaption}
\usepackage{float}
\usepackage{tikz}
\usetikzlibrary{positioning, arrows.meta, fit, calc, shapes.geometric}
\usepackage{microtype}
\usepackage{rotating}

\newcommand{\cedar}{\textsc{CEDAR}}
\newcommand{\cdnots}{\textsc{CDNOTS}}
\newcommand{\sypi}{\textsc{SyPI}}
\newcommand{\RR}{\mathbb{R}}
\newcommand{\EE}{\mathbb{E}}

\DeclareMathOperator{\dCov}{dCov}

\newtheorem{proposition}{Proposition}
\newtheorem{theorem}{Theorem}

\title{\cedar{}: Causal Edge Discovery for Autoregressive Processes}

\author{
  Mohammad Fesanghary \\
  Bloomberg LP \\
  \texttt{mfesanghary1@bloomberg.net}
}

\date{}

\begin{document}
\maketitle

\begin{abstract}
We propose \cedar{} (\textbf{C}ausal \textbf{E}dge \textbf{D}iscovery for \textbf{A}uto\textbf{r}egressive Processes), a constraint-based method for lagged causal edge discovery in sparse autoregressive time series.
\cedar{} screens candidate cross-variable lags using AR(1)-residualized, $\mathcal{U}$-centered distance correlation, then applies two targeted conditional-independence tests per significant cross-variable lag candidate and accepts at most one lag per ordered pair.
A stable MCI pruning step removes indirect edges, and optional deterministic C-nodes adjust for specified trend-like nonstationarity.
In sparse regimes where few lags survive screening, \cedar{} requires $O(d^2)$ CI tests after screening while retaining edge-level interpretability.
\cedar{} is most effective when data are scarce and variables exhibit lag-1 self-dynamics; methods with richer conditioning sets become preferable as $T$ grows or when higher-order autoregressive or simultaneous multi-lag effects are common.
\end{abstract}

\renewcommand{\thefootnote}{\fnsymbol{footnote}}
\footnotetext[1]{Source code and experiment scripts are available in the \textsc{Causal-TS} package; all results are fully reproducible.}
\renewcommand{\thefootnote}{\arabic{footnote}}

\section{Introduction}
\label{sec:intro}

Causal discovery from multivariate time series is central to fields such as climate science, finance, neuroscience, and industrial process monitoring~\citep{peters2017elements, runge2019detecting, spirtes2000causation}.
Time-series causal discovery is substantially more challenging than the i.i.d.\ setting: observations are temporally dependent, variables may influence one another across multiple lags, and strong autocorrelation or common drivers can create spurious associations.
Although temporal ordering provides useful information---causes must precede their effects---it also introduces a larger and more structured search space over lagged variables.
These challenges are further amplified in real-world systems, where causal mechanisms may be nonstationary and variables may depend on higher-order autoregressive structure.
Reliable causal discovery from time series therefore requires methods that can distinguish direct causal effects from autocorrelation, lagged dependence, indirect links, and changing mechanisms over time.

\paragraph{Related work.}
Causal discovery algorithms for time series broadly fall into two paradigms.
\emph{Score-based methods}---DYNOTEARS~\citep{pamfil2020dynotears}, VARLiNGAM~\citep{hyvarinen2010estimation}, Neural Granger Causality~\citep{tank2021neural}, CUTS+~\citep{cheng2023cuts_plus}---formulate discovery as an optimization problem, searching for the graph that best explains the data according to a scoring criterion.
They scale well with dimensionality and can naturally incorporate sparsity penalties, but require post-hoc threshold tuning for binary graph output and make individual edges harder to interpret since they arise from a global objective rather than explicit evidence for each link.
\emph{Constraint-based methods}---PCMCI/PCMCI+~\citep{runge2019detecting, runge2020discovering}, LPCMCI~\citep{gerhardus2020high}, \cdnots{}~\citep{sadeghi2025cdnots}---provide a more direct and interpretable statistical procedure, adding or removing edges based on conditional independence (CI) tests.
Their main cost is statistical and computational: conditioning sets can grow quickly in dense or high-dimensional graphs, reducing test power when samples are limited.

\paragraph{Our approach.}
To address this low-sample regime, we introduce \cedar{}, which constructs targeted conditioning sets from temporal structure rather than searching over large parent subsets.
The method retains edge-level interpretability: every reported link is supported by explicit CI tests and then rechecked by a stable MCI pruning step.
\cedar{} builds on \sypi{}~\citep{mastakouri2021sypi}, a causal feature-selection method for autoregressive time series with a designated sink target.
We extend the idea to all ordered pairs by automating lag selection, relaxing the sink-target restriction through temporal acyclicity, and pruning indirect links.
The resulting procedure is a practical finite-sample method for sparse AR(1)-like systems; its oracle guarantee requires additional structural conditions stated in Appendix~\ref{sec:theory}.

\paragraph{Contributions.}
Our main contributions are:
\begin{enumerate}
  \item \textbf{Nonlinear lag selection with analytic screening}: The original \sypi{} uses Lasso regression for lag selection, which is limited to linear dependencies. We replace it with $\mathcal{U}$-centered distance correlation~\citep{szekely2014partial}---a nonparametric measure that captures arbitrary nonlinear associations and has zero expectation under independence---paired with an analytic $t$-test~\citep{szekely2013dcor} for fast screening, enabling \cedar{} to screen candidate lags in both linear and nonlinear systems.
  \item \textbf{Stable MCI pruning}: After the initial two-condition CI testing, some detected edges may be indirect---mediated through other variables. We introduce a stable, order-independent Momentary Conditional Independence (MCI) pruning step that conditions on discovered parents of both source and target, removing all such edges in a single pass without the order dependence that affects iterative approaches.
  \item \textbf{Nonstationarity handling via C-node}: Real-world time series often exhibit distributional changes over time that create spurious associations between variables sharing a common trend. We introduce a deterministic C-node encoding polynomial time trends, treated as a pure cause, that adjusts for specified deterministic trend-like nonstationarity before it produces false edges---a capability absent from the original \sypi{} framework.
\end{enumerate}

\section{Background and Notation}
\label{sec:background}

Let $\mathbf{X}=(X^1,\ldots,X^d)$ be a $d$-variate time series with $T$ observations and maximum lag $L$.
We focus on the cross-variable lagged graph: $G_X[c,e,\ell]=1$, for $c\neq e$ and $\ell\geq1$, denotes the direct edge $X^c(t-\ell)\to X^e(t)$.
Lag-1 self-loops are assumed by the AR(1) model and are not learned; optional C-node edges are stored separately as $G_C[r,e]$ at lag~0.

\cedar{} adapts the two-condition idea of \sypi{}~\citep{mastakouri2021sypi}, which tests whether a candidate $X^c(t-w)\to X^e(t)$ is a direct cause of a designated target.
For each candidate source $c$, \sypi{} conditions on one shifted node from each other candidate time series,
$\mathcal{S}^c=\{X^k(t-w_k-1):k\neq c\}$, and tests
\[
\text{(C1)}\quad X^c(t-w)\not\!\perp\!\!\!\perp X^e(t)\mid\{\mathcal{S}^c,X^e(t-1)\},
\]
\[
\text{(C2)}\quad X^c(t-w-1)\perp\!\!\!\perp X^e(t)\mid\{\mathcal{S}^c,X^c(t-w),X^e(t-1)\}.
\]
Condition~1 establishes remaining dependence, while Condition~2 verifies that the selected lag blocks the influence of the source's deeper past.
This logic relies on strict AR(1) self-dynamics and on the absence of lag-mismatched bypass paths; Appendix~\ref{sec:theory} states the exact sufficient conditions and a counterexample.

For lag screening, \cedar{} uses the signed bias-corrected squared distance correlation $\mathcal{R}^{*2}_n$ of~\citet{szekely2014partial}.
Because its underlying $\mathcal{U}$-centered distance covariance has zero expectation under independence, it provides a fast nonlinear marginal score; definitions are given in Appendix~\ref{sec:dcor_details}.

\section{Method}
\label{sec:method}

\cedar{} proceeds in three main phases: (1) lag selection, (2) CI testing via Conditions~1 and~2, and (3) MCI pruning.
Algorithm~\ref{alg:cedar} gives the full procedure; we describe each component in detail below.

\begin{algorithm}[t]
\caption{\cedar{} summary. Full implementation details follow in Sections~\ref{sec:lag_selection}--\ref{sec:complexity}.}
\label{alg:cedar}
\small
\textbf{Require:} data $\mathbf{X}\in\RR^{T\times d}$, max lag $L$, CI test $\mathcal{T}$, levels $\alpha_1,\alpha_2,\alpha_{\rm lag}$; optional C-node basis $g_1,\ldots,g_R$.
\begin{algorithmic}[1]
  \STATE Optionally append deterministic pure-cause columns $C_r(t)=g_r(t/T)$.
  \STATE Residualize each non-C target: $\tilde X^e(t)=X^e(t)-\hat\beta_eX^e(t-1)$.
  \STATE Score all cross-variable lags with $\mathcal{R}_n^{*2}(X^c(t-\ell),\tilde X^e(t))$; score C-nodes only at lag~0.
  \STATE Apply BH screening at $\alpha_{\rm lag}$.
  \FOR{each target $e$ and candidate source $c$ with screened lag(s)}
    \STATE Build $\mathcal{S}^c$ from other screened non-C candidates at $t-\hat w_{k,e}-1$ and screened C-nodes at lag~0.
    \IF{$c$ is a C-node} \STATE test Condition~1 only and set $G_C[r,e]=1$ if significant.
    \ELSE \STATE test screened lags in decreasing score; accept the first lag passing Conditions~1--2 and set $G_X[c,e,w]=1$.
    \ENDIF
  \ENDFOR
  \STATE Run all MCI pruning tests on the same initial $G_X$ and remove accepted independences simultaneously.
\end{algorithmic}
\end{algorithm}

\subsection{Lag Selection}
\label{sec:lag_selection}

For each variable pair $(X^c, X^e)$ and candidate lag $\ell \in \{1, \ldots, L\}$, we score the lag importance using $\mathcal{U}$-centered distance correlation~\citep{szekely2014partial} on AR(1)-residualized targets.
Specifically, we first remove the target's linear autoregressive component:
\begin{equation}
  \tilde{X}^e(t) = X^e(t) - \hat{\alpha}_e - \hat{\beta}_e \, X^e(t{-}1), \quad
  \hat{\beta}_e = \frac{\sum_t (X^e(t){-}\bar{X}^e)(X^e(t{-}1){-}\bar{X}^e_{-})}{\sum_t (X^e(t{-}1){-}\bar{X}^e_{-})^2}, \quad
  \hat{\alpha}_e = \bar{X}^e - \hat{\beta}_e \bar{X}^e_{-},
  \label{eq:partial_dcor_residual}
\end{equation}
where $\bar{X}^e$ and $\bar{X}^e_{-}$ denote the sample means of $X^e(t)$ and $X^e(t{-}1)$ (i.e., ordinary least squares with an intercept).
Then compute the lag importance score:
\begin{equation}
  M[c, e, \ell] = \mathcal{R}_n^{*2}(X^c(t{-}\ell),\, \tilde{X}^e(t)),
  \label{eq:partial_dcor}
\end{equation}
where $n = T - L$ is the effective sample size.
The residualization prevents the target's AR dynamics from inflating importance scores at irrelevant lags.
Note that this is a simple linear residualization, not partial distance correlation in the Sz\'{e}kely--Rizzo sense~\citep{szekely2014partial}; if the target's autoregressive dynamics are nonlinear, the removal is incomplete.

\paragraph{Analytic $t$-test.}
Since $\EE[\dCov_n^*] = 0$ under independence (Proposition~1 in~\citep{szekely2014partial}), we can exploit the asymptotic null distribution of $\mathcal{R}_n^{*2}$ derived by \citet{szekely2013dcor}.
Define $v = n(n-3)/2$; under $H_0: X \perp\!\!\!\perp Y$, the statistic
\begin{equation}
  T_n = \frac{\sqrt{v-1}\,\cdot\,\mathcal{R}_n^{*2}}{\sqrt{1 - (\mathcal{R}_n^{*2})^2}}
  \label{eq:t_stat}
\end{equation}
approximately follows a Student-$t$ distribution with $v - 1$ degrees of freedom, yielding p-value $p = 1 - F_{t(v-1)}(T_n)$.
This eliminates the need for permutation tests.
The $t$-test assumes i.i.d.\ samples; for strongly autocorrelated data, the effective sample size is smaller than $n$, which can lead to anti-conservative p-values.
All experiments reported here use the analytic test.
Appendix~\ref{sec:calibration} reports calibration under AR(1) nulls: type-I error is mildly inflated (6.4--9.0\% at nominal 5\%), worst at strong autocorrelation and small $T$.

\subsection{Multi-Lag Screening with BH Correction}
\label{sec:multi_lag}

Real causal systems often exhibit effects at multiple lags.
The original \sypi{} framework tests at the minimum lag per pair~\citep{mastakouri2021sypi}, identifying a single causal lag.

\cedar{} applies Benjamini--Hochberg (BH) correction~\citep{benjamini1995controlling} across all $d(d-1)L$ cross-variable lag-significance hypotheses as a multiple-testing screening heuristic.
Let $p_{(1)} \leq p_{(2)} \leq \ldots \leq p_{(m)}$ be the ordered p-values from the lag $t$-tests, with $m = d(d-1)L$ total cross-variable hypotheses.
The BH procedure rejects hypothesis $i$ if:
\begin{equation}
  p_{(i)} \leq \frac{i}{m} \cdot \alpha_{\text{lag}},
  \label{eq:bh}
\end{equation}
where $\alpha_{\text{lag}}$ is a screening threshold (default $0.05$). If C-nodes are used, their lag-0 screening p-values are included in the same screening family unless stated otherwise.

After BH correction, each pair $(X^c, X^e)$ may have multiple significant lags.
We test these in descending order of importance (highest $\mathcal{R}_n^{*2}$ first) and accept only the first lag that passes both Condition~1 and Condition~2 (\textbf{keep=``first''}).
That is, \cedar{} selects one lag per ordered pair; multi-lag screening identifies the best candidate lag rather than recovering all simultaneous lags between a pair.

\subsection{Stable MCI Pruning}
\label{sec:mci}

After the initial CI-based discovery (Phases~1--2), some detected edges may be indirect---mediated through other variables in the discovered graph.
\cedar{} applies a Momentary Conditional Independence (MCI) pruning step inspired by PCMCI~\citep{runge2019detecting} but adapted for the pairwise discovery setting.

For each discovered edge $X^c(t{-}\tau) \to X^e(t)$, we test:
\begin{equation}
  X^c(t{-}\tau) \perp\!\!\!\perp X^e(t) \;\mid\; \mathcal{S}_{\text{MCI}},
  \label{eq:mci_test}
\end{equation}
where the MCI conditioning set combines parents of both the target and the source, each at their respective times:
\begin{equation}
  \mathcal{S}_{\text{MCI}} = \bigl(\text{Pa}(X^e(t)) \setminus \{X^c(t{-}\tau)\}\bigr) \cup \{X^e(t{-}1)\} \cup \text{Pa}(X^c(t{-}\tau)) \cup \{X^c(t{-}\tau{-}1)\} \cup \mathcal{C}_e .
  \label{eq:mci_cond}
\end{equation}
Here $\text{Pa}(X^e(t))$ denotes discovered non-C parents of $X^e$ at their lagged times relative to $t$ (excluding the edge being tested), $X^e(t{-}1)$ is the target's autoregressive term, $\text{Pa}(X^c(t{-}\tau))$ denotes discovered parents of $X^c$ shifted to the source's time, and $\mathcal{C}_e=\{C_r(t):G_C[r,e]=1\}$ when C-nodes are used.
Including $\text{Pa}(X^c(t{-}\tau))$ blocks indirect paths through the source's own parents, reducing false positives from shared ancestry.

\paragraph{Stability (order-independence).}
All MCI tests are performed on the \emph{same} initial graph, and edge removals are applied simultaneously after all tests complete.
This ensures that the pruning result is independent of the order in which edges are tested---a property violated by sequential pruning, where removing one edge changes the conditioning sets for subsequent tests.

\subsection{Nonstationarity via C-Node}
\label{sec:c_node}

Nonstationarity creates spurious associations between variables that share a common trend, leading to false positive edges when ignored.
Following the insight of CD-NOD~\citep{huang2020causal} and \cdnots{}~\citep{sadeghi2025cdnots}, \cedar{} introduces a deterministic time-index variable $C$ that encodes the temporal structure of nonstationarity.

The C-node is a synthetic variable appended to the data matrix before lag selection:
\begin{equation}
  C(t) = g(t / T), \quad t = 1, \ldots, T,
  \label{eq:c_node}
\end{equation}
where $g: [0, 1] \to \RR$ is a deterministic basis function.
When multiple basis functions are used, each becomes a separate C-node column appended to the data.
Available presets include \texttt{linear} ($g(u) = u$, one column), \texttt{linear+quad} ($g_1(u) = u$, $g_2(u) = u^2$, two columns), and \texttt{step+linear} ($g_1(u) = u$, $g_2(u) = \mathbf{1}[u > 0.5]$, two columns).

The C-node is treated as a \emph{pure cause}---it can influence $X$ variables ($C \to X^e$) but is never tested as an effect.
Discovered $C \to X^e$ edges indicate that variable $X^e$'s mechanism is nonstationary.
All $C \to X$ edges are forced to lag~0, reflecting the contemporaneous nature of the nonstationarity influence.
Appendix~\ref{sec:cnode_misspec} shows that \cedar{} degrades gracefully under basis misspecification (F1: $1.00 \to 0.93$); the C-node adjusts for specified deterministic trends rather than guaranteeing robustness to arbitrary nonstationarity.

\subsection{Assumptions}
\label{sec:assumptions}

\cedar{}'s correctness relies on the following assumptions (numbered for cross-reference with Appendix~\ref{sec:theory}):
\begin{enumerate}
  \item[\textup{(A1)}] \textbf{Observed Markovian sufficiency}: The observed time-unrolled process is Markov with respect to its causal DAG, and there are no latent common causes among observed variables. \cedar{}'s MCI pruning (Phase~3) conditions on discovered parents; latent confounders may induce false positive edges that this pruning cannot remove.
  \item[\textup{(A2)}] \textbf{Faithfulness}: Statistical dependencies reflect true causal connections (no exact cancellations).
  \item[\textup{(A3)}] \textbf{Strictly AR(1) self-dynamics}: Every variable $X^j$ has a self-loop $X^j(t{-}1) \to X^j(t)$ (A7 in~\citep{mastakouri2021sypi}) and no self-arrows at lags $s > 1$ (A8). In Appendix~\ref{sec:named_benchmarks}, we test robustness on benchmarks that violate this assumption.
  \item[\textup{(A4)}] \textbf{Stationarity up to specified trend terms}: The cross-variable graph is time-invariant. If C-nodes are used, they model only the specified deterministic trend basis; arbitrary regime changes are outside the oracle guarantee.
\end{enumerate}
Under these assumptions and oracle CI tests, \cedar{}'s pairwise testing correctly identifies direct causes when no alternate directed path from $X^c(t{-}w{-}1)$ to $X^e(t)$ bypasses $X^c(t{-}w)$ while evading the blocking by $\mathcal{S}^c$.
This condition holds for single-lag dependency graphs~\citep{mastakouri2021sypi} (at most one direct lag per ordered pair with no lag-mismatched mediating paths).
We do not claim general oracle correctness; the pairwise conditioning structure is a practical approximation that trades theoretical completeness for statistical power at finite~$T$.
A formal statement with proof sketch and a discussion of the failure mode---where an indirect path $X^c(t{-}w{-}1) \to X^k(t{-}w_k) \to X^e(t)$ evades $\mathcal{S}^c$ because $\mathcal{S}^c$ places $X^k$ at $t{-}w_k{-}1$ rather than $t{-}w_k$---is given in Appendix~\ref{sec:theory}.

\subsection{Complexity}
\label{sec:complexity}

Phase~1 computes $O(d^2 L)$ distance correlations at $O(n \log n)$ each for univariate pairs (AVL-tree algorithm~\citep{huo2016fast}), totaling $O(d^2 L n \log n)$ where $n = T - L$.
Phase~2 performs at most two CI tests per significant-lag candidate; in sparse settings where few lags survive BH screening, this totals approximately $O(d^2)$ tests, though the worst case is $O(d^2 L_{\mathrm{sig}})$ where $L_{\mathrm{sig}}$ is the mean number of significant lags per pair.
Phase~3 performs one MCI test per discovered edge ($|E|$ tests).
With partial correlation, a standard regression-based CI test costs $O(nk^2+k^3)$ for conditioning-set size $k$ (or less with cached/incremental residualization). \cedar{} keeps $k$ small by conditioning only on screened candidates for the same target and on discovered source/target parents during MCI pruning.

\section{Experiments}
\label{sec:experiments}

We evaluate \cedar{} on synthetic benchmarks with known ground truth, comparing against PCMCI+~\citep{runge2020discovering}, \cdnots{}~\citep{sadeghi2025cdnots}, and \cdnots{}+ (which replaces \cdnots{}'s PC-style skeleton discovery with PCMCI+-style iterative conditioning).
All methods support arbitrary CI tests. We use partial correlation (ParCorr) with $\alpha=0.01$ in the large-scale experiments because it is data-efficient and fast at high $d$; Appendix~\ref{sec:named_benchmarks} demonstrates pairing \cedar{} with SplitKCI on a nonlinear benchmark.
\cedar{} uses AR(1)-residualized dcor for lag selection (\texttt{partial\_dcor}) by default.
The \textsc{Causal-TS} package contains the implementation, experiment scripts, and seed-level outputs.
Additional results on fixed-structure benchmarks are in Appendix~\ref{sec:named_benchmarks}.

\subsection{Scaling with Dimensionality and Sample Size}
\label{sec:scaling}

Figure~\ref{fig:f1_vs_d} evaluates \cedar{} on random sparse structural causal processes (SCPs) with increasing dimensionality at three sample sizes: $T{=}100$ (very data-scarce), $T{=}200$ (data-scarce), and $T{=}500$ (data-rich).
All F1 scores are lag-specific: an edge $(X^c(t{-}\tau) \to X^e(t))$ counts as a true positive only if both the variable pair and the exact lag $\tau$ match the ground truth.
Edge functions are drawn from a mixed pool: approximately half linear and half nonlinear (quadratic and sinusoidal).
The comparison reveals a clear sample-size effect.
At $T \leq 200$, \cedar{} is the best or tied-best method across most configurations---particularly on scale-free and small-world topologies for $d \leq 40$, where it leads by 1--4 F1 points.
This advantage stems from \cedar{}'s fixed, small conditioning sets: with at most two CI tests per significant lag candidate, each test retains high statistical power even when $T/d$ is small.
At $T{=}500$, methods with richer conditioning sets (PCMCI+, \cdnots{}+) close the gap and often surpass \cedar{}, especially on scale-free graphs where hub structure benefits from conditioning on larger parent sets.
Full numerical results (mean and standard deviation over 20 seeds) for all sample sizes are reported in Appendix~\ref{sec:full_results}.

\begin{figure}[t]
\centering
\includegraphics[width=\textwidth]{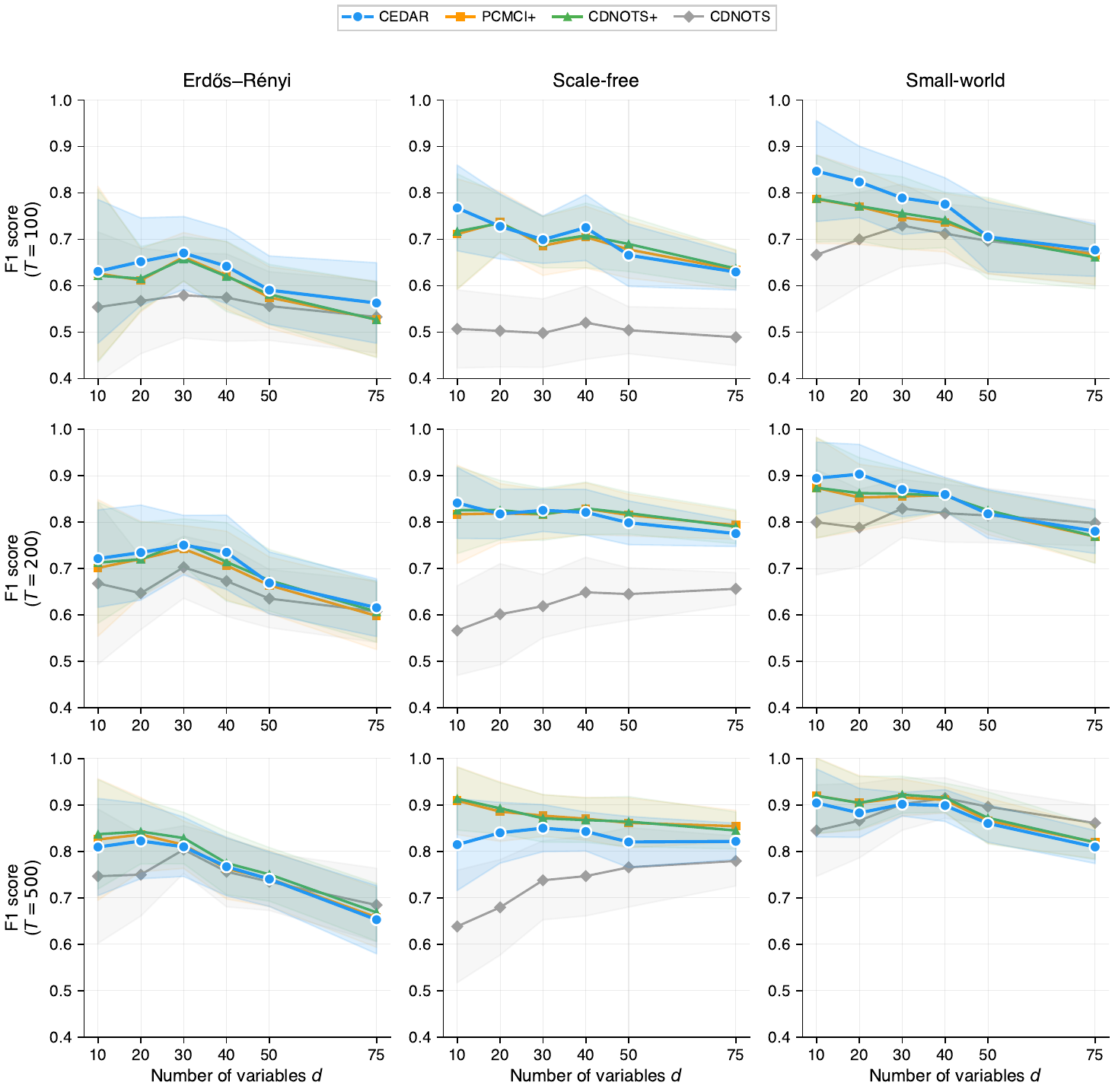}
\caption{F1 score vs.\ dimensionality $d$ for three graph topologies and three sample sizes. Shaded bands show $\pm 1$ standard deviation over 20 seeds.}
\label{fig:f1_vs_d}
\end{figure}

\subsection{Robustness and Failure Modes}
\label{sec:robustness}

Because \cedar{}'s oracle result relies on strict AR(1) self-dynamics (A3), a single true direct lag per pair (A5), and the path-blocking condition (A7), we explicitly evaluate violations of these assumptions.
Table~\ref{tab:stress_main} reports \cedar{} and \cdnots{}+ on three stress tests with graphs large enough to expose genuine failure modes ($d{=}15$--$20$, $T{=}300$, 10 seeds).

\begin{table}[t]
\centering
\caption{Robustness under assumption violations. \cdnots{}+ is more robust due to its richer conditioning sets, but \cedar{} degrades gracefully---F1 remains between 0.865 and 0.924 even under simultaneous violations. The gap grows predictably as violations accumulate.}
\label{tab:stress_main}
\small
\begin{tabular}{l l cc}
\toprule
\textbf{Violation} & \textbf{Method} & Prec & F1 \\
\midrule
\multirow{2}{*}{A7 bypass $\times$5 ($d{=}15$, A7)} & \cedar{} & .893 & .924 \\
 & \cdnots{}+ & .934 & .965 \\
\midrule
\multirow{2}{*}{Dense AR(2) ($d{=}15$, A3)} & \cedar{} & .875 & .886 \\
 & \cdnots{}+ & .934 & .963 \\
\midrule
\multirow{2}{*}{Mixed A3+A5+A7 ($d{=}20$)} & \cedar{} & .905 & .865 \\
 & \cdnots{}+ & .958 & .979 \\
\bottomrule
\end{tabular}
\end{table}

\cdnots{}+ is more robust under these violations, as expected from its iterative conditioning.
However, \cedar{} remains competitive---F1 between 0.865 and 0.924---indicating graceful rather than catastrophic degradation.
These results clarify the intended use case: \cedar{} is preferable in data-scarce settings where variables exhibit lag-1 self-dynamics, whereas \cdnots{}+/PCMCI+ are preferable when $T$ is large or assumption violations are expected.
Full TP/FP/FN breakdowns are in Appendix~\ref{sec:stress_tests}.

\subsection{Ablation Study}
\label{sec:ablation}

Table~\ref{tab:ablation} shows the incremental contribution of each \cedar{} feature, starting from a \sypi{}-style baseline (Pearson correlation, single-lag, no pruning).
The overall trajectory is strongly positive, but intermediate steps are non-monotone: adding the analytic $t$-test and multi-lag selection temporarily reduces precision because both increase recall (more lags are tested and more edges proposed) before MCI pruning corrects the false positives.
MCI pruning is load-bearing: it recovers the precision gain, confirming that lag selection and CI testing work in concert with the pruning step.
The final row replaces plain dcor with AR(1)-residualized dcor in the lag selection phase---this is not an additional step after MCI pruning but a modification of Phase~1 that provides the largest single improvement, demonstrating that removing the target's autoregressive component from lag importance scores is critical.
Appendix~\ref{sec:ablation_topologies} confirms the same pattern holds across Erd\H{o}s--R\'{e}nyi, scale-free, and small-world topologies.

To directly establish novelty over the closest ancestor, Appendix~\ref{sec:sypi_comparison} compares \cedar{} against a faithful \sypi{}-all-targets adaptation: LassoCV lag selection (matching the original paper's lag identification), $\alpha_1{=}\alpha_2{=}0.05$ (from the paper's threshold sweep), and no MCI pruning.
\cedar{} outperforms this baseline by 0.18--0.46 F1 points across all settings.
The gap is explained by two independent factors: (i)~AR(1)-residualized dcor detects lag associations more precisely than Lasso, especially on scale-free graphs; (ii)~MCI pruning corrects the high false-positive rate that results from Lasso's simultaneous fit of all lags without the AR decomposition (mean precision of \sypi{}-all-targets: 0.34 vs.\ 0.87 for \cedar{}).
\cedar{} is also the fastest method at every $d$ tested ($2.4\times$ faster than PCMCI+ at $d{=}75$); runtime details are in Appendix~\ref{sec:full_results}, Table~\ref{tab:runtime}.

\begin{table}[t]
\centering
\caption{Incremental ablation on the 11-node cascade network ($d{=}11$, $\alpha{=}0.01$, mean over 10 seeds). Each row adds one feature to the previous configuration. The pattern is consistent across sample sizes: AR(1)-residualized dcor and MCI pruning are the key components.}
\label{tab:ablation}
\small
\begin{tabular}{l ccc ccc ccc}
\toprule
& \multicolumn{3}{c}{$T{=}100$} & \multicolumn{3}{c}{$T{=}200$} & \multicolumn{3}{c}{$T{=}500$} \\
\cmidrule(lr){2-4} \cmidrule(lr){5-7} \cmidrule(lr){8-10}
\textbf{Configuration} & F1 & Prec & Rec & F1 & Prec & Rec & F1 & Prec & Rec \\
\midrule
Baseline (\sypi{}-style) & .447 & .559 & .376 & .696 & .734 & .676 & .731 & .679 & .794 \\
\quad + $\mathcal{U}$-centered dcor & .464 & .585 & .388 & .683 & .731 & .659 & .722 & .672 & .782 \\
\quad + Analytic $t$-test & .585 & .703 & .506 & .679 & .685 & .682 & .710 & .644 & .794 \\
\quad + Multi-lag & .623 & .696 & .571 & .699 & .678 & .729 & .704 & .622 & .818 \\
\quad + MCI pruning & .591 & .821 & .465 & .763 & .860 & .694 & .822 & .828 & .818 \\
\quad + AR(1)-res.\ dcor (\textbf{\cedar{}}) & \textbf{.700} & \textbf{.958} & \textbf{.553} & \textbf{.887} & \textbf{.970} & \textbf{.824} & \textbf{.964} & \textbf{.971} & \textbf{.959} \\
\bottomrule
\end{tabular}
\end{table}

\subsection{Real-World Application: River Network Discovery}
\label{sec:river}

We evaluate \cedar{} on the Elbe River benchmark~\citep{stein2025causalrivers}: 12 gauging stations on the Elbe main branch ($T{=}11{,}688$, $d{=}12$, 3-hour resolution); ground truth is a chain of 11 edges (upstream $\to$ downstream).
The data is challenging because hydrological regimes change the causal lag structure and rainfall creates transient spurious correlations (see Appendix~\ref{sec:elbe_details} for full description).
We apply PELT changepoint detection to segment into three regimes (low/normal/high-flow) and run \cedar{} per regime with adapted lag bounds ($L{=}8,5,3$), taking the union of discovered edges.

\begin{figure}[t]
\centering
\includegraphics[width=\textwidth]{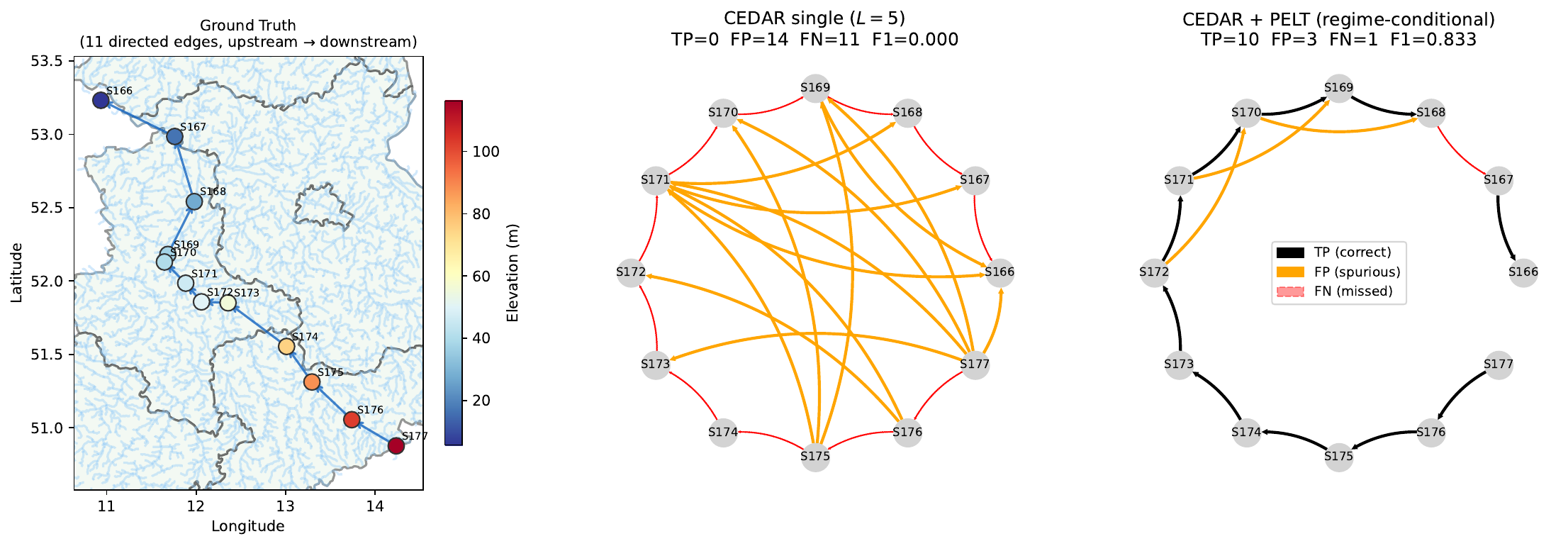}
\caption{Elbe River main branch ($d{=}12$, 11 true edges). \textbf{Left:} ground truth chain. \textbf{Center:} CEDAR single run---nonstationarity confounds discovery (0 TP, 14 FP). \textbf{Right:} CEDAR + PELT---low-flow regime exposes clean causal signal, recovering 10 of 11 edges (F1$=$0.833). Green: TP, orange: FP, red dashed: FN.}
\label{fig:elbe_compare}
\end{figure}

Regime-conditional \cedar{} achieves F1$=$0.833 (10 TP, 3 FP) vs.\ F1$=$0.000 on the single run (Figure~\ref{fig:elbe_compare}), with the low-flow regime alone yielding 0.900 precision (Appendix~\ref{sec:elbe_details}, Table~\ref{tab:elbe}).
\cdnots{}+ similarly benefits from regime conditioning (F1: 0.417 $\to$ 0.750), though \cedar{} outperforms it in this data-scarce-per-segment setting.

\section{Conclusion}
\label{sec:conclusion}

We presented \cedar{}, a training-free method for lagged causal edge discovery in sparse autoregressive time series. After lag screening, \cedar{} requires $O(d^2)$ CI tests in sparse regimes and empirically extends \sypi{}'s two-condition test from single-target feature selection toward full graph discovery under additional structural restrictions (Appendix~\ref{sec:theory}).
The method addresses three limitations of the original \sypi{} framework: single-lag selection is replaced by AR(1)-residualized dcor with BH-corrected multi-lag testing; the sink-node restriction is relaxed via temporal acyclicity by testing all ordered pairs and applying stable MCI pruning; and a deterministic C-node reduces trend-induced false positives.

Together these components yield a method that excels in the data-scarce regime: at $T \leq 200$, \cedar{} is the best-performing method in the majority of tested configurations across three graph topologies and dimensionalities up to $d{=}75$.
As $T$ increases ($T \geq 500$), the trade-off reverses: richer conditioning sets become affordable, allowing PCMCI+ and \cdnots{}+ to surpass \cedar{} on scale-free and dense graphs.
On a real-world river network extracted from CausalRivers (Section~\ref{sec:river}), regime-conditional \cedar{} with PELT segmentation recovers 10 of 11 true edges (F1$=$0.833), with the low-flow regime yielding 0.900 precision---an encouraging result on a real-world nonstationary system despite missing data and known violation of causal sufficiency.
It uses standard significance levels with few hyperparameters ($\alpha$, $L$, C-node basis), making it practical when data are limited.

\paragraph{Limitations.}
The two-condition CI tests assume each variable has a lag-1 self-dependency ($X^j(t{-}1) \to X^j(t)$); for variables with higher-order autoregressive dynamics, unconditioned lags may weaken both conditions, producing false negatives that MCI pruning only partially recovers.
The analytic $t$-test for lag selection assumes i.i.d.\ samples and becomes anti-conservative under strong autocorrelation (Appendix~\ref{sec:calibration}); more extensive calibration under varying network structures and non-Gaussian noise, together with dependent-data corrections, is left for future work.
The C-node approach benefits from a correctly specified basis; misspecification reduces but does not eliminate its effectiveness (Appendix~\ref{sec:cnode_misspec}).
At large $T$ ($\geq 500$), methods with iterative conditioning (PCMCI+, \cdnots{}+) achieve higher recall by conditioning on richer parent sets---a trade-off inherent to the pairwise design.

\bibliographystyle{plainnat}
\bibliography{references}

\appendix
\section{Theoretical Analysis}
\label{sec:theory}

\paragraph{Scope of the oracle guarantee.}
The correctness result applies only to eligible cross-variable lagged edges under observed Markovian sufficiency, faithfulness, strict AR(1) self-dynamics, single recovered lag per ordered pair, correct lag ordering, and the CEDAR path-blocking condition.
It does not cover C-node edges, latent confounders, contemporaneous effects, changing graph structure, or arbitrary multi-lag dynamics.

We state and prove \cedar{}'s correctness guarantees in the oracle setting.
Let $\mathcal{G}^*$ be the time-unrolled DAG over nodes $X_i^s$ ($i \in \{1,\ldots,d\}$, $s \in \mathbb{Z}$).
A lagged edge $X^c(t{-}w) \to X^e(t)$ means the DAG contains $X_c^{t-w} \to X_e^t$.
For a candidate edge $X^c(t{-}w) \to X^e(t)$, write $P := X_c^{t-w}$, $D := X_c^{t-w-1}$, $Y := X_e^t$, and define the Phase-2 conditioning sets:
\begin{align*}
  \mathcal{S}_{e,c} &:= \{X_k^{t-\hat{w}_{k,e}-1} : k \neq c,\, \hat{w}_{k,e} \text{ exists}\}, \\
  Z_1 &:= \mathcal{S}_{e,c} \cup \{X_e^{t-1}\}, \\
  Z_2 &:= Z_1 \cup \{P\},
\end{align*}
where $\hat{w}_{k,e}$ is the Phase-1 selected lag of $X^k$ toward $X^e$.
The MCI conditioning set for edge $P \to Y$ is
\[
  \mathcal{S}_{\mathrm{MCI}}(P,Y) :=
  \bigl(\widehat{\mathrm{Pa}}^{(2)}_\times(Y) \setminus \{P\}\bigr)
  \cup \{X_e^{t-1}\}
  \cup \widehat{\mathrm{Pa}}^{(2)}_\times(P)
  \cup \{X_c^{t-\tau-1}\}.
\]
Here $\widehat{\mathrm{Pa}}^{(2)}_\times$ denotes Phase-2 discovered cross-variable parents; self-loops are treated separately through the fixed AR(1) terms.

\paragraph{Assumptions.}
Assumptions A1--A4 are stated in Section~\ref{sec:assumptions}: (A1)~observed Markovian sufficiency, taken to include strict forward-time ordering ($X_i^s \to X_j^r \Rightarrow s<r$); (A2)~full faithfulness ($\perp\!\!\!\perp \Leftrightarrow$ d-separation); (A3)~strictly AR(1) self-dynamics; and (A4)~stationarity up to specified trend terms (used by the empirical lag screening, not by the pointwise d-separation arguments below).
The following additional assumptions are specific to the correctness analysis:
\begin{enumerate}
  \item[\textup{(A5)}] \textbf{Single true direct lag per ordered pair.} For each ordered pair $(c,e)$, $c \neq e$, at most one lag satisfies $G^*[c,e,w]=1$. This is a restriction on the true graph; that \cedar{} outputs at most one lag per pair follows separately from the first-accept rule.
  \item[\textup{(A6)}] \textbf{Oracle candidate reachability and ordering.} For every true edge $X^c(t{-}w) \to X^e(t)$, lag $w$ appears in the ordered candidate list supplied to Phase~2, and every candidate appearing before $w$ fails at least one of Conditions~1--2 under oracle CI, so the first-accept rule reaches $w$. This is an algorithm-dependent oracle success condition, not a consequence of A1--A3.
  \item[\textup{(A7)}] \textbf{CEDAR path-blocking.} For every true edge $X^c(t{-}w) \to X^e(t)$, the deeper source node $D = X_c^{t-w-1}$ is d-separated from $Y = X_e^t$ by the Condition-2 set in $\mathcal{G}^*$:
  \[
    X_c^{t-w-1} \perp_{\mathcal{G}^*} X_e^t \;\mid\; Z_2.
    \tag{A-CB}
  \]
  Equivalently, \emph{every} path from $X_c^{t-w-1}$ to $X_e^t$ is blocked by $Z_2$. It does not suffice to check only paths that avoid $P = X_c^{t-w}$: since $P \in Z_2$, conditioning on $P$ blocks a path on which $P$ is a non-collider but \emph{opens} a path on which $P$ is a collider, so any such collider path must be blocked elsewhere.
  \item[\textup{(A8)}] \textbf{Eligible-parent closure.} For every target $Y = X_e^t$ covered by the theorem, every true parent of $Y$ is either the lag-1 self-parent $X_e^{t-1}$ or a true eligible cross-variable parent $X_c^{t-w}$ with $(c,e,w) \in \mathcal{E}_{\mathrm{elig}}$, where $\mathcal{E}_{\mathrm{elig}}$ denotes the positive-lag, cross-variable, non-C candidate triples that \cedar{} searches.
\end{enumerate}

\begin{proposition}[Phase-2 exact criterion]
\label{prop:phase2}
Assume~A1--A2. Let $X_c^{t-w} \to X_e^t$ be a true direct edge with $c \neq e$; by construction $P \notin Z_1$, $Y \notin Z_2$, and $D \notin Z_2$ (the source variable $c$ is excluded from $\mathcal{S}_{e,c}$, and all of $Z_2$ lies at times $\leq t-1$). Then:
\begin{enumerate}
  \item[\textup{(i)}] Condition~1 always passes under oracle CI\@.
  \item[\textup{(ii)}] Condition~2 passes under oracle CI if and only if \textup{(A-CB)} holds.
\end{enumerate}
\end{proposition}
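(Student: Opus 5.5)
The plan is to reduce both parts to d-separation statements in $\mathcal{G}^*$ using A1 (global Markov property) and A2 (faithfulness). Together these give: for disjoint node sets, $A \perp\!\!\!\perp B \mid Z$ holds in the distribution if and only if $A \perp_{\mathcal{G}^*} B \mid Z$. Under oracle CI, each condition is therefore decided by a purely graphical query. First I will verify the bookkeeping claims stated in the proposition, since they make the queries well posed (no variable is conditioned on itself).

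\begin{itemize}
  \item $P \notin Z_1$: $\mathcal{S}_{e,c}$ contains only variables $k \neq c$, and $X_e^{t-1}$ is not a $c$-node.
  \item $Y \notin Z_2$: every element of $Z_2$ has time index at most $t-1$, because $\hat w_{k,e} \geq 1$ and $P$ sits at time $t-w \leq t-1$.
  \item $D \notin Z_2$: $D$ is a $c$-node, and the only $c$-node in $Z_2$ is $P \neq D$.
\end{itemize}

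\textbf{Part (i).} Condition~1 asserts $P \not\!\perp\!\!\!\perp Y \mid Z_1$. By A2 it suffices to exhibit a path between $P$ and $Y$ that is active given $Z_1$. The single-edge path $P \to Y$ has no intermediate nodes, so it cannot be blocked by any set not containing its endpoints. Since $P, Y \notin Z_1$, the path is d-connecting, and hence $P \not\perp_{\mathcal{G}^*} Y \mid Z_1$. Faithfulness (A2) turns this graphical dependence into probabilistic dependence, so the oracle test rejects independence and Condition~1 passes. Only the faithfulness direction is used here, and it is the one place the true edge is needed.

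\textbf{Part (ii).} Condition~2 is the oracle statement $D \perp\!\!\!\perp Y \mid Z_2$.

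\begin{itemize}
  \item ($\Leftarrow$) If (A-CB) holds, the global Markov property from A1 gives the conditional independence, so Condition~2 passes.
  \item ($\Rightarrow$) If Condition~2 passes, the independence holds, and faithfulness (A2) forces the d-separation (A-CB).
\end{itemize}

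Combining the two directions gives the equivalence. This is essentially immediate once A1--A2 are read as exact Markov-plus-faithfulness equivalence.

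The main care point is not a computation but making sure the equivalence is applied on the correct (infinite) time-unrolled DAG with disjoint sets. I would handle this by restricting to the finite ancestral subgraph of $\{D, Y\} \cup Z_2$ (respectively $\{P, Y\} \cup Z_1$). Forward-time ordering (part of A1) makes this ancestral set well defined: every ancestor of a node precedes it in time. By the standard result, d-separation in a DAG is preserved when passing to the ancestral subgraph of the sets involved, so Markov and faithfulness arguments can be carried out there. I would also remark, following the comment in A7, that no claim restricted to paths avoiding $P$ is made: the criterion is the full d-separation, since conditioning on $P$ may open collider paths through $P$.
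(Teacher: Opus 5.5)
Your proof is correct and follows the paper's argument essentially verbatim. Part (i) uses the single-edge path $P \to Y$, which is d-connecting given $Z_1$, together with faithfulness; part (ii) uses the Markov/faithfulness equivalence between the oracle statement $D \perp\!\!\!\perp Y \mid Z_2$ and (A-CB).

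Your closing aside about restricting to a ``finite ancestral subgraph'' is unnecessary, since A1--A2 are posited directly on the time-unrolled $\mathcal{G}^*$. It is also inaccurate: with AR(1) self-loops over $s \in \mathbb{Z}$, the ancestral set of $Y = X_e^t$ contains $X_e^{t-1}, X_e^{t-2}, \ldots$ and is infinite.
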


\begin{proof}
\textit{(i).}
The direct path $P \to Y$ has no internal nodes and is d-connected given $Z_1$ since neither endpoint is in $Z_1$.
By faithfulness, $P \not\!\perp\!\!\!\perp Y \mid Z_1$, so Condition~1 rejects independence.

\textit{(ii).}
Condition~2 tests $D \perp\!\!\!\perp Y \mid Z_2$.
Under the Markov and faithfulness assumptions, this oracle CI statement holds if and only if $D$ and $Y$ are d-separated by $Z_2$ in $\mathcal{G}^*$, which is exactly \textup{(A-CB)}.
\end{proof}

\begin{proposition}[Phase-2 completeness and correct lag on true pairs]
\label{prop:completeness}
Assume A1--A2 and A5--A7. Then every true eligible cross-variable parent is selected by Phase~2 at its correct lag: if $G^*[c,e,w]=1$ for an eligible triple $(c,e,w)$, then $X_c^{t-w} \in \widehat{\mathrm{Pa}}^{(2)}_\times(X_e^t)$.
\end{proposition}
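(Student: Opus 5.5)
Fix a true eligible edge $X_c^{t-w}\to X_e^t$, i.e.\ $G^*[c,e,w]=1$ with $(c,e,w)\in\mathcal{E}_{\mathrm{elig}}$. The argument follows the first-accept rule of Phase~2 for the ordered pair $(c,e)$ and shows that it accepts exactly lag $w$. Proposition~\ref{prop:phase2} supplies the oracle behaviour of Conditions~1--2 at the true lag. Assumption~A6 supplies the behaviour of the candidates examined before it.

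\emph{Step 1 (reachability).} By A6, lag $w$ appears in the ordered candidate list that Phase~2 receives for $(c,e)$. By A6 again, every candidate $w'$ listed before $w$ fails at least one of Conditions~1--2 under oracle CI. So none of them is accepted, and the loop in Algorithm~\ref{alg:cedar} actually reaches $w$.

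\emph{Step 2 (acceptance at $w$).} With $P=X_c^{t-w}$, $D=X_c^{t-w-1}$, $Y=X_e^t$, and the sets $Z_1,Z_2$ defined above, Proposition~\ref{prop:phase2}(i) (which needs only A1--A2) gives that Condition~1 passes. A7 is exactly (A-CB) for this edge, so Proposition~\ref{prop:phase2}(ii) gives that Condition~2 passes. The first-accept rule therefore sets $G_X[c,e,w]=1$. Hence $X_c^{t-w}\in\widehat{\mathrm{Pa}}^{(2)}_\times(X_e^t)$, and Phase~2 places no other lag for this ordered pair.

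\emph{Step 3 (correct lag, role of A5).} Under A5, $w$ is the unique true lag for $(c,e)$. So ``selected at its correct lag'' is well defined, and the single lag output by the first-accept rule is that one. One point needs care: $Z_1$ and $Z_2$ are built from the Phase-1 lags $\hat w_{k,e}$ of the \emph{other} sources, not from lags accepted in Phase~2. Because of this, the conditioning sets for the pair $(c,e)$ do not depend on Phase~2 outcomes for other pairs, and the argument needs no induction over pairs or targets. I would state this explicitly so that (A-CB) and the A6 ordering hypothesis are evaluated with respect to fixed sets.

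\emph{Main obstacle.} Almost all of the difficulty is hidden in the assumptions, so the proof itself is short. The real obstacle is making sure that Proposition~\ref{prop:phase2} applies verbatim. I would check three things. First, the $Z_2$ used when the loop reaches $w$ is the same set referenced in A7; this holds because $\mathcal{S}_{e,c}$ does not depend on which candidate lag of $c$ is being tested. Second, $P\notin Z_1$ and $D,Y\notin Z_2$; this holds because $c$ is excluded from $\mathcal{S}_{e,c}$ and all of $Z_2$ lies at times $\le t-1$. Third, the Markov and faithfulness equivalence of A1--A2 turns the oracle CI outcomes into the d-separation statements used above.
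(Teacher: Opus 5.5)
Your proposal is correct and follows the paper's proof essentially step for step. A6 gives reachability under the first-accept rule; Proposition~\ref{prop:phase2}(i), and (ii) together with A7, give acceptance at lag $w$; and A5 makes the accepted lag the unique true one. Your added remark is a harmless clarification that the paper leaves implicit: $\mathcal{S}_{e,c}$ is built from Phase-1 lags, so $Z_2$ at lag $w$ is exactly the set in (A-CB) and no induction over pairs is needed.
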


\begin{proof}
Fix a true eligible edge $X_c^{t-w} \to X_e^t$.
By A6, the true lag $w$ appears in the candidate list and no earlier candidate passes both conditions, so the first-accept rule reaches $w$.
By Proposition~\ref{prop:phase2}(i), Condition~1 passes at lag $w$; by A7 and Proposition~\ref{prop:phase2}(ii), Condition~2 passes.
Hence Phase~2 accepts $X_c^{t-w} \to X_e^t$. By A5 there is no second true direct lag for the pair, so the edge is selected at the correct lag.
\end{proof}

\begin{proposition}[Stable MCI pruning is exact]
\label{prop:mci}
Assume A1--A2. Suppose that after Phase~2, $\mathrm{Pa}^*(Y) \subseteq \widehat{\mathrm{Pa}}^{(2)}_\times(Y) \cup \{X_e^{t-1}\}$ for every target $Y = X_e^t$ to which MCI is applied (denoted \textup{(A-PC)}).
Then, under oracle CI tests, stable MCI pruning removes every false positive and preserves every true eligible cross-variable edge among the edges it tests.
\end{proposition}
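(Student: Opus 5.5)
The plan is to treat the two claims of the proposition separately. In both cases I reason in $\mathcal{G}^*$ through d-separation, which by A1--A2 is equivalent to oracle CI. Stability lets me analyze each MCI test in isolation. Every test is run on the same Phase-2 graph, so the set $\mathcal{S}_{\mathrm{MCI}}(P,Y)$ does not depend on the outcome of any other test or on the testing order, and the removals are applied simultaneously. It therefore suffices to show, for each tested edge $P \to Y$ (with $P = X_c^{t-\tau}$), that oracle independence $P \perp\!\!\!\perp Y \mid \mathcal{S}_{\mathrm{MCI}}(P,Y)$ holds exactly when $P \notin \mathrm{Pa}^*(Y)$. The proposition concerns cross-variable non-C edges, so I will ignore the C-node term $\mathcal{C}_e$.

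\textbf{Preliminary timing facts.} I will check which nodes lie in $\mathcal{S}_{\mathrm{MCI}}(P,Y)$.
\begin{itemize}
\item The discovered parents of $Y$ are at times $\le t-1$, and so is $X_e^{t-1}$.
\item The shifted discovered parents of $P$, $\widehat{\mathrm{Pa}}^{(2)}_\times(P)$, are at times $< t-\tau \le t-1$, and so is $X_c^{t-\tau-1}$.
\end{itemize}
Hence every node of $\mathcal{S}_{\mathrm{MCI}}$ lies strictly before $t$. Under the forward-time ordering in A1, such nodes are non-descendants of $Y$, and so is $P$.

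Next I will check that $P \notin \mathcal{S}_{\mathrm{MCI}}$.
\begin{itemize}
\item $P$ is explicitly removed from the parent set of $Y$.
\item $X_e^{t-1}$ belongs to variable $e \neq c$.
\item The shifted parents of $P$ are strictly earlier than $P$.
\item $X_c^{t-\tau-1} \neq P$.
\end{itemize}
A subtle point is that a discovered parent of $Y$ could coincide with a node in the source block. This is harmless, because it is a set union and no such node equals $P$.

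\textbf{False positives are removed.} Suppose $P \notin \mathrm{Pa}^*(Y)$. By (A-PC), $\mathrm{Pa}^*(Y) \subseteq \widehat{\mathrm{Pa}}^{(2)}_\times(Y) \cup \{X_e^{t-1}\}$. Since $P$ is not a true parent, removing $P$ loses nothing, so $\mathrm{Pa}^*(Y) \subseteq \mathcal{S}_{\mathrm{MCI}}$.

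The local Markov property, implied by the global Markov property in A1, gives
\[
Y \perp\!\!\!\perp \mathrm{nd}(Y)\setminus \mathrm{Pa}^*(Y) \mid \mathrm{Pa}^*(Y).
\]
The remaining conditioning nodes $\mathcal{S}_{\mathrm{MCI}} \setminus \mathrm{Pa}^*(Y)$, together with $P$, all lie in $\mathrm{nd}(Y)\setminus\mathrm{Pa}^*(Y)$. Weak union then yields $P \perp\!\!\!\perp Y \mid \mathcal{S}_{\mathrm{MCI}}$, so the oracle test accepts independence and the edge is removed.

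Equivalently, in d-separation terms: any path leaving $Y$ through a parent has that parent as a conditioned non-collider. Any path leaving $Y$ through a child must eventually contain a collider that is a descendant of $Y$. Such a collider and all its descendants lie at times $> t$, none of which is in the conditioning set, so the path is blocked.

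\textbf{True edges are preserved.} Suppose $P \in \mathrm{Pa}^*(Y)$. The one-edge path $P \to Y$ has no interior nodes, and neither endpoint is in $\mathcal{S}_{\mathrm{MCI}}$ by the facts above. Hence the path is d-connecting, and by faithfulness (A2) $P \not\!\perp\!\!\!\perp Y \mid \mathcal{S}_{\mathrm{MCI}}$. The edge is therefore kept. This is the same argument as Proposition~\ref{prop:phase2}(i).

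I expect the main obstacle to be the bookkeeping in the preliminary step rather than the d-separation arguments. Specifically, I must verify that the shifted source-parent block and $X_c^{t-\tau-1}$ never reintroduce $P$ or any node at time $\ge t$. I must also confirm that extra, possibly spurious, conditioning nodes cannot open collider paths. The local-Markov/weak-union route handles the latter cleanly, because all conditioning nodes are non-descendants of $Y$.
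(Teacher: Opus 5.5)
Your proof is correct and follows the paper's own route. For false positives, you use (A-PC) to place all of $\mathrm{Pa}^*(Y)$ inside $\mathcal{S}_{\mathrm{MCI}}(P,Y)$, then apply the local Markov property (with $P$ and every conditioning node non-descendants of $Y$ by forward-time ordering) and weak union; for true edges, you use the d-connecting one-edge path $P \to Y$ plus faithfulness. The paper makes the decomposition step explicit before weak union, which your ``weak union then yields'' leaves implicit, while you add the bookkeeping that $P \notin \mathcal{S}_{\mathrm{MCI}}(P,Y)$ and a redundant but valid direct d-separation argument.
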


\begin{proof}
\textit{False positive.}
Let $P = X_c^{t-\tau} \notin \mathrm{Pa}^*(Y)$.
Because all cross-variable edges have lag $\geq 1$, every variable in $\mathcal{S}_{\mathrm{MCI}}(P,Y)$ lies at a time strictly before $t$; by temporal acyclicity, every such variable is a non-descendant of $Y$.
By assumption, $\mathrm{Pa}^*(Y) \subseteq \widehat{\mathrm{Pa}}^{(2)}_\times(Y) \cup \{X_e^{t-1}\} \subseteq \mathcal{S}_{\mathrm{MCI}}(P,Y) \cup \{P\}$.
Since $P \notin \mathrm{Pa}^*(Y)$, this gives $\mathrm{Pa}^*(Y) \subseteq \mathcal{S}_{\mathrm{MCI}}(P,Y)$.
By the local Markov property applied to $Y$:
\[
  Y \perp_{\mathcal{G}^*} \bigl(\mathrm{ND}(Y) \setminus \mathrm{Pa}^*(Y)\bigr) \mid \mathrm{Pa}^*(Y),
\]
where all variables in $\mathcal{S}_{\mathrm{MCI}} \setminus \mathrm{Pa}^*(Y)$ and $P$ itself are in $\mathrm{ND}(Y)$ by temporal acyclicity.
By decomposition, restricting to the subset $P \cup (\mathcal{S}_{\mathrm{MCI}} \setminus \mathrm{Pa}^*(Y)) \subseteq \mathrm{ND}(Y) \setminus \mathrm{Pa}^*(Y)$, we obtain $Y \perp_{\mathcal{G}^*} (P \cup (\mathcal{S}_{\mathrm{MCI}} \setminus \mathrm{Pa}^*(Y))) \mid \mathrm{Pa}^*(Y)$.
By weak union, $Y \perp_{\mathcal{G}^*} P \mid \mathcal{S}_{\mathrm{MCI}}(P,Y)$.
The oracle CI test accepts this independence, so MCI removes the false positive.
This does not rely on the (generally false) claim that conditioning on non-descendants never opens a path---it can open a collider path. What guarantees the independence is the joint local-Markov statement together with the inclusion of every true parent of $Y$ in $\mathcal{S}_{\mathrm{MCI}}$, via decomposition and weak union.

\textit{True edge.}
Let $P \to Y$ be a true edge. The direct path $P \to Y$ has no internal nodes, and neither $P$ nor $Y$ is in $\mathcal{S}_{\mathrm{MCI}}(P,Y)$ (by construction, $P$ is excluded from the target-parent component).
The direct path is therefore d-connected given $\mathcal{S}_{\mathrm{MCI}}(P,Y)$.
By faithfulness, $P \not\!\perp\!\!\!\perp Y \mid \mathcal{S}_{\mathrm{MCI}}(P,Y)$, so the oracle CI test rejects independence and the true edge is preserved.
\end{proof}

\begin{theorem}[Conditional oracle exactness on the eligible edge domain]
\label{thm:correctness}
Assume A1--A8 and oracle CI tests, and that every Phase-2 accepted eligible edge is tested by stable MCI using the frozen Phase-2 parent sets. Then \cedar{} exactly recovers the eligible positive-lag cross-variable graph:
\[
  \widehat{G}_X[c,e,w] = G^*[c,e,w] \qquad \text{for every } (c,e,w) \in \mathcal{E}_{\mathrm{elig}}.
\]
No claim is made for triples outside $\mathcal{E}_{\mathrm{elig}}$ (self-loops are assumed rather than learned, and C-node edges are excluded).
\end{theorem}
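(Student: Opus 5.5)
The plan is to combine Proposition~\ref{prop:completeness} with Proposition~\ref{prop:mci}. The bridge between them is a check that the hypothesis (A-PC) of Proposition~\ref{prop:mci} follows from A8 together with Proposition~\ref{prop:completeness}. Fix an eligible triple $(c,e,w)\in\mathcal{E}_{\mathrm{elig}}$. We must show $\widehat{G}_X[c,e,w]=1$ iff $G^*[c,e,w]=1$, where $\widehat{G}_X$ is the graph after stable MCI pruning applied to the frozen Phase-2 graph $\widehat{G}^{(2)}_X$. The proof splits into the two directions.

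\emph{Step 1: (A-PC) holds.} Take any target $Y=X_e^t$ and any true parent $Q\in\mathrm{Pa}^*(Y)$. By A8, either $Q=X_e^{t-1}$, or $Q=X_c^{t-w}$ with $(c,e,w)\in\mathcal{E}_{\mathrm{elig}}$ and $G^*[c,e,w]=1$. In the second case, Proposition~\ref{prop:completeness} (using A1--A2, A5--A7) gives $Q\in\widehat{\mathrm{Pa}}^{(2)}_\times(Y)$. Hence $\mathrm{Pa}^*(Y)\subseteq\widehat{\mathrm{Pa}}^{(2)}_\times(Y)\cup\{X_e^{t-1}\}$ for every target. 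Stationarity of the graph (A4, i.e.\ time invariance of the unrolled DAG) lets us treat the statement for a generic $t$ as holding for all $t$. This matters because $\mathcal{S}_{\mathrm{MCI}}$ uses parent sets shifted to the source's time $t-\tau$.

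\emph{Step 2: the two directions.}
\begin{itemize}
\item \emph{Completeness.} If $G^*[c,e,w]=1$, Proposition~\ref{prop:completeness} places the edge in $\widehat{G}^{(2)}_X$. Since every Phase-2 edge is MCI-tested by hypothesis, the true-edge part of Proposition~\ref{prop:mci} preserves it, so $\widehat{G}_X[c,e,w]=1$.
\item \emph{Soundness.} If $G^*[c,e,w]=0$, either the triple was never accepted in Phase~2, in which case it is absent from the output because pruning only removes edges, or it was accepted. If accepted, it is tested by MCI, and the false-positive part of Proposition~\ref{prop:mci}, valid by Step~1, removes it.
\end{itemize}

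One subtlety in the soundness direction is a wrong-lag acceptance: for a true pair, the first-accept rule could pick $w'\neq w$ and thereby suppress $w$. A6 excludes this, since it guarantees that no earlier candidate passes both conditions, and Proposition~\ref{prop:completeness} already relies on it. So for true pairs the accepted lag is exactly $w$, and A5 ensures no other true lag is lost.

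\emph{Main obstacle.} The key point is that the frozen parent sets used in MCI are those of Phase~2, not the pruned ones. Stability is therefore what makes Step~1 suffice. I will check two things. First, the false-positive argument in Proposition~\ref{prop:mci} only needs $\mathrm{Pa}^*(Y)\subseteq\mathcal{S}_{\mathrm{MCI}}\cup\{P\}$, so extra spurious members of $\widehat{\mathrm{Pa}}^{(2)}_\times$ are harmless: they are still non-descendants of $Y$ by temporal ordering (A1). Second, the true-edge argument needs only $P,Y\notin\mathcal{S}_{\mathrm{MCI}}$. Since $Y$ is at time $t$ and all conditioning nodes precede $t$, this holds, and faithfulness (A2) concludes.
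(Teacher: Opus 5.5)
Your proposal is correct and follows the paper's proof. You derive (A-PC) from A8 together with Proposition~\ref{prop:completeness}; true eligible triples then survive by the true-edge half of Proposition~\ref{prop:mci}, and any accepted non-edge is removed by its false-positive half (your triple-wise split is only a reorganization of the paper's pair-wise one). Your appeal to A4 for time-shifting is harmless but not needed, because the false-positive argument only requires $\mathrm{Pa}^*(Y)\subseteq\mathcal{S}_{\mathrm{MCI}}(P,Y)\cup\{P\}$, and the shifted source parents enter only as extra non-descendants of $Y$.
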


\begin{proof}
Fix a target $Y = X_e^t$. By A8, $\mathrm{Pa}^*(Y) \subseteq \{X_e^{t-1}\} \cup \{X_c^{t-w} : (c,e,w) \in \mathcal{E}_{\mathrm{elig}},\; G^*[c,e,w]=1\}$. By Proposition~\ref{prop:completeness}, every true eligible cross-variable parent of $Y$ lies in $\widehat{\mathrm{Pa}}^{(2)}_\times(Y)$, so $\mathrm{Pa}^*(Y) \subseteq \widehat{\mathrm{Pa}}^{(2)}_\times(Y) \cup \{X_e^{t-1}\}$; the premise \textup{(A-PC)} of Proposition~\ref{prop:mci} holds.
For a pair with a true eligible edge at lag $w$: Proposition~\ref{prop:completeness} selects it at $w$, the first-accept rule adds no second lag (A5), and Proposition~\ref{prop:mci} preserves it, while every other eligible lag $u \neq w$ has $G^*[c,e,u]=0$ and yields no surviving edge. For a pair with no true eligible edge: any Phase-2 acceptance is a false positive and is removed by Proposition~\ref{prop:mci}. Hence $\widehat{G}_X[c,e,w] = G^*[c,e,w]$ for all $(c,e,w) \in \mathcal{E}_{\mathrm{elig}}$.
\end{proof}

\noindent
A6 and A7 are strong, method-specific oracle conditions rather than consequences of A1--A3: A6 requires the first-accept order to reach the true lag, and A7 requires the deeper source to be d-separated from the target by the conditioning set \cedar{} constructs.
As the counterexample below shows, A7 fails when lag-mismatched indirect paths exist, limiting the theorem to graphs where causal lags and mediating paths are aligned.

\paragraph{Sink-node relaxation.}
The original \sypi{} required assumption~A6$_{\text{SyPI}}$ (the target is a sink node) to prevent conditioning on descendants of the target.
In \cedar{}, all variables in $Z_1 = \mathcal{S}_{e,c} \cup \{X_e^{t-1}\}$ lie at times strictly before $t$; by strict forward-time ordering (A1), no variable in $Z_1$ is a descendant of $X_e^t$.
Hence the sink assumption is unnecessary \emph{for the specific purpose of ensuring that $Z_1$ contains no descendants of the target}. This does not by itself make arbitrary conditioning harmless---earlier nodes can still be colliders on other paths---which is why the separate path-blocking condition A7 is still required for Condition~2.

\paragraph{Why assumption A7 is necessary.}
A1--A5 alone do not suffice for Phase-2 completeness, even with at most one direct lag per ordered pair.

\textbf{Counterexample.}
Consider three processes $A, B, C$ with first-order self-dynamics and the stationary cross-edge templates
\[
  A^{s-2} \to C^s, \qquad A^{s-2} \to B^s, \qquad B^{s-1} \to C^s \quad \text{for all } s,
\]
i.e.\ $A \to C$ and $A \to B$ at lag~2 and $B \to C$ at lag~1, together with self-loops $A^{s-1}\to A^s$, $B^{s-1}\to B^s$, $C^{s-1}\to C^s$. A concrete stable realization is
\[
  A_s = a A_{s-1} + \varepsilon^A_s,\quad
  B_s = b B_{s-1} + u A_{s-2} + \varepsilon^B_s,\quad
  C_s = c C_{s-1} + v A_{s-2} + q B_{s-1} + \varepsilon^C_s,
\]
with mutually independent nondegenerate noises and generic nonzero coefficients $u,v,q$ chosen so the process is stable and faithful. There are no latent confounders or contemporaneous edges, and each ordered pair has at most one direct lag.
Assume Phase~1 includes lag~2 for $A \to C$, no earlier $A \to C$ candidate passes both conditions, and it supplies $\hat{w}_{B,C}=1$.

When testing the true edge $A^{t-2} \to C^t$ we have $P = A^{t-2}$, $D = A^{t-3}$, $Y = C^t$, and since $\hat{w}_{B,C}=1$, $\mathcal{S}_{C,A} = \{B^{t-2}\}$, so
\[
  Z_2 = \{B^{t-2},\, A^{t-2},\, C^{t-1}\}.
\]
Taking $s = t-1$ in the lag-2 template $A^{s-2} \to B^s$ gives $A^{t-3} \to B^{t-1}$, so the graph contains the path $A^{t-3} \to B^{t-1} \to C^t$. Its only internal node $B^{t-1}$ is a non-collider and is not in $Z_2$, and no conditioned node lies on it; the path is therefore active given $Z_2$, so $A^{t-3} \not\perp_{\mathcal{G}^*} C^t \mid Z_2$. By faithfulness \textup{(A-CB)} fails, Condition~2 rejects independence, and \cedar{} misses the true edge $A^{t-2} \to C^t$.

\noindent
\textbf{Why $B^{t-2}$ does not block the bypass.}
$\mathcal{S}_{e,c}$ places $X^k$ at time $t{-}\hat{w}_{k,e}{-}1$, one step \emph{before} its direct-cause time.
The conditioned node $B^{t-2}$ would block a route entering $B$ through $B^{t-2}$ and its self-loop $B^{t-2} \to B^{t-1}$; but the lag-2 edge $A \to B$ injects the deeper source directly at $B^{t-1}$ via $A^{t-3} \to B^{t-1}$, bypassing $B^{t-2}$.
The example satisfies A1--A6 and A8 but violates A7.

\section{Distance-Correlation Details}
\label{sec:dcor_details}

For a sample $(X_1,\ldots,X_n)$, $n>2$, let $a_{kl}=\|X_k-X_l\|$ and define the $\mathcal{U}$-centered distance matrix~\citep[Def.~2, Eq.~3.1]{szekely2014partial}
\begin{equation}
  \widetilde{A}_{kl}=\begin{cases}
    a_{kl}-\dfrac{1}{n-2}\sum_{s=1}^n a_{ks}-\dfrac{1}{n-2}\sum_{s=1}^n a_{sl}+\dfrac{1}{(n-1)(n-2)}\sum_{s,t=1}^n a_{st}, & k\neq l,\\[6pt]
    0, & k=l .
  \end{cases}
\end{equation}
Given the analogous matrix $\widetilde{B}$ for $Y$, define
\[
  \dCov_n^*(X,Y)=\frac{1}{n(n-3)}\sum_{k\neq l}\widetilde{A}_{kl}\widetilde{B}_{kl},\qquad
  \mathcal{R}_n^{*2}(X,Y)=\frac{\dCov_n^*(X,Y)}{\sqrt{\dCov_n^*(X,X)\dCov_n^*(Y,Y)}}.
\]
The estimator $\dCov_n^*$ has zero expectation under independence for all $n$; consequently, $\mathcal{R}_n^{*2}$ is signed and can be negative under the null.

\section{Calibration of the Analytic $t$-Test}
\label{sec:calibration}

Table~\ref{tab:calibration} reports the empirical type-I error of the analytic distance-correlation $t$-test under null AR(1) graphs (no cross-variable links, $d{=}5$, 200 simulations per setting, nominal $\alpha{=}0.05$).
The test is mildly anti-conservative, with inflation increasing under stronger autocorrelation and smaller sample sizes.
At moderate autocorrelation (AR${\leq}0.5$), type-I error remains below 7.5\%; at very strong autocorrelation (AR${=}0.95$) it reaches ${\sim}9\%$ for $T{=}100$.
Note that even the i.i.d.\ rows (AR${=}0$) sit at ${\sim}6.5\%$ for every $T$: the analytic $t$-approximation of \citet{szekely2013dcor} is derived in the high-dimensional limit, so on the scalar series tested here it is mildly anti-conservative regardless of autocorrelation; the AR(1) dependence adds a further, sample-size-dependent inflation on top of this floor.
The AR(1) residualization in \cedar{}'s lag selection partially mitigates this inflation by removing the dominant autocorrelation before computing distance correlation.

\begin{table}[h]
\centering
\caption{Empirical type-I error of the analytic dcor $t$-test at nominal $\alpha{=}0.05$, under AR(1) null (no cross-links, $d{=}5$, 200 simulations).}
\label{tab:calibration}
\small
\begin{tabular}{l ccc}
\toprule
\textbf{AR coeff.} & $T{=}100$ & $T{=}200$ & $T{=}500$ \\
\midrule
0.00 (i.i.d.) & .067 & .064 & .065 \\
0.20           & .067 & .067 & .066 \\
0.50           & .072 & .068 & .066 \\
0.80           & .075 & .076 & .069 \\
0.95           & .090 & .084 & .072 \\
\bottomrule
\end{tabular}
\end{table}

\section{C-Node Misspecification}
\label{sec:cnode_misspec}

Table~\ref{tab:cnode_misspec} evaluates robustness to C-node basis misspecification on the confounded-trends benchmark ($T{=}500$, 10 seeds).
The data contains a shared quadratic trend; the correct basis is \texttt{linear+quad}.
\cedar{} degrades minimally with a wrong or absent C-node (F1: $1.00 \to 0.93$), because its MCI pruning naturally removes trend-induced indirect edges.
In contrast, \cdnots{} and \cdnots{}+ rely on the C-node for Phase~3 orientation and collapse when the basis is misspecified (F1: $0.92 \to 0.13$).
PCMCI+ lacks C-node support entirely and performs poorly regardless.

\begin{table}[h]
\centering
\caption{C-node misspecification on confounded-trends ($T{=}500$, $L{=}2$, mean F1 over 10 seeds). \texttt{linear+quad} is the correct basis. Cross-var = cross-variable edges only; All = including self-loops. \cedar{} assumes AR(1) self-loops for all variables by design; variables without true self-lags produce self-loop FPs in the ``All'' metric.}
\label{tab:cnode_misspec}
\small
\begin{tabular}{l cc cc cc}
\toprule
& \multicolumn{2}{c}{\texttt{linear+quad}} & \multicolumn{2}{c}{\texttt{linear} (wrong)} & \multicolumn{2}{c}{No C-node} \\
\cmidrule(lr){2-3} \cmidrule(lr){4-5} \cmidrule(lr){6-7}
\textbf{Method} & Cross & All & Cross & All & Cross & All \\
\midrule
\textbf{\cedar{}}  & \textbf{1.000} & .769 & \textbf{.933} & \textbf{.758} & \textbf{.933} & \textbf{.758} \\
\cdnots{}          & .900 & \textbf{.948} & .123 & .391 & .118 & .376 \\
\cdnots{}+         & .800 & .930 & .237 & .558 & .335 & .643 \\
PCMCI+             & .348 & .675 & .348 & .675 & .348 & .675 \\
\bottomrule
\end{tabular}
\end{table}

\section{Fixed-Structure Benchmarks}
\label{sec:named_benchmarks}

Table~\ref{tab:main} reports results on four benchmarks from the \textsc{Causal-TS} package (available via \texttt{load\_dataset()}).
We describe each benchmark's structural equations below.

\paragraph{Cascade Network, 11-Node.}
Inspired by the protein signaling cascade of~\citet{sachs2005causal}, with hub nodes (PKC, PKA) driving a downstream cascade. $L{=}3$, $\varepsilon_i \sim \mathcal{N}(0,1)$.

\noindent
\begin{minipage}[t]{0.52\textwidth}
\small
\begin{align*}
X_0^t &= 0.9\,X_0^{t-1} + \varepsilon_0 \\
X_1^t &= 0.6\,X_1^{t-1} + 0.5\,X_0^{t-2} + \varepsilon_1 \\
X_2^t &= 0.4\,X_2^{t-1} + 0.4\,X_0^{t-3} + 0.3\,X_1^{t-1} + \varepsilon_2 \\
X_3^t &= 0.25\,X_3^{t-1} + \varepsilon_3 \\
X_4^t &= 0.25\,X_4^{t-1} - 0.25\,X_3^{t-1} + \varepsilon_4 \\
X_5^t &= 0.25\,X_5^{t-1} + 0.5\,X_3^{t-2} - 0.5\,X_4^{t-1} + \varepsilon_5 \\
X_6^t &= 0.3\,X_6^{t-1} - 0.25\,X_3^{t-3} + 0.25\,X_4^{t-3} + \varepsilon_6 \\
X_7^t &= 0.4\,X_7^{t-1} + 0.35\,X_3^{t-1} + 0.25\,X_4^{t-2} + \varepsilon_7 \\
X_8^t &= 0.5\,X_8^{t-1} - 0.5\,X_3^{t-1} + 0.25\,X_4^{t-1} + 0.6\,X_7^{t-2} + \varepsilon_8 \\
X_9^t &= 0.6\,X_9^{t-1} + 0.45\,X_4^{t-1} - 0.35\,X_8^{t-2} + \varepsilon_9 \\
X_{10}^t &= 0.7\,X_{10}^{t-1} - 0.25\,X_4^{t-1} + 0.25\,X_9^{t-2} + \varepsilon_{10}
\end{align*}
\end{minipage}
\hfill
\begin{minipage}[t]{0.46\textwidth}
\centering
\vspace{0.5em}
\includegraphics[width=\textwidth]{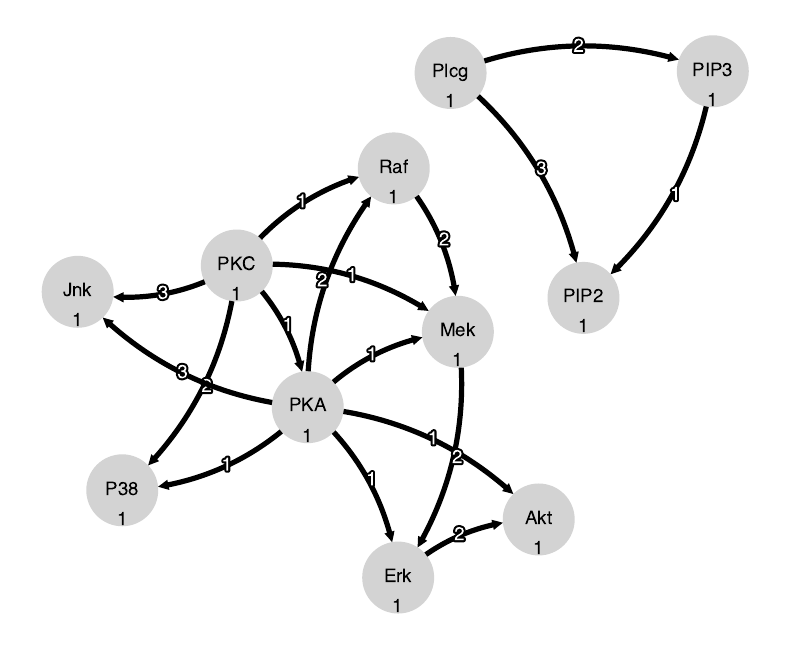}
\end{minipage}

\vspace{1em}

\paragraph{6-Node Linear VAR.}
Hub variable $X_0$, $L{=}3$, $c = \sqrt{2}$.
$X_1$ and $X_2$ have \emph{no lag-1 self-dependency}, violating \cedar{}'s key assumption and testing robustness.

\noindent
\begin{minipage}[t]{0.55\textwidth}
\small
\begin{align*}
X_0^t &= 0.5c\,X_0^{t-1} + \varepsilon_0 \\
X_1^t &= 0.5\,X_0^{t-2} + \varepsilon_1 \\
X_2^t &= -0.4\,X_0^{t-3} + \varepsilon_2 \\
X_3^t &= 0.25c\,X_3^{t-1} - 0.5\,X_0^{t-2} + 0.45c\,X_4^{t-1} + \varepsilon_3 \\
X_4^t &= 0.25c\,X_4^{t-1} - 0.5c\,X_3^{t-1} + \varepsilon_4 \\
X_5^t &= 0.65\,X_5^{t-1} - 0.1\,X_2^{t-1} - 0.55\,X_4^{t-2} + \varepsilon_5
\end{align*}
\end{minipage}
\hfill
\begin{minipage}[t]{0.42\textwidth}
\centering
\vspace{1em}
\includegraphics[height=3.5cm]{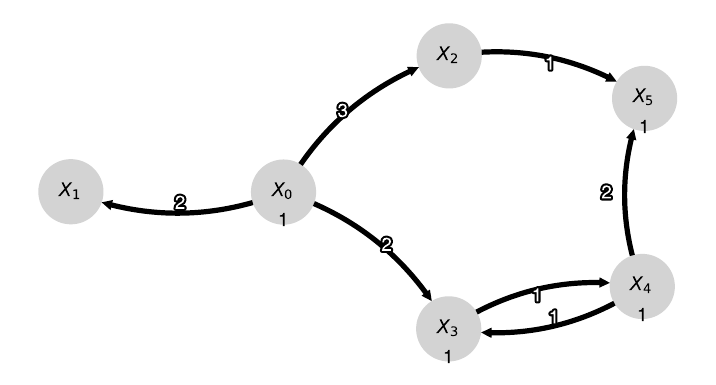}
\end{minipage}

\vspace{1em}

\paragraph{5-Node Nonlinear System.}
Quadratic nonlinearities, $L{=}3$, $c = \sqrt{2}$.
$X_0$ has AR(2) structure, violating the single-lag assumption. Uses SplitKCI for all methods.

\noindent
\begin{minipage}[t]{0.55\textwidth}
\small
\begin{align*}
X_0^t &= 0.95c\,X_0^{t-1} - 0.9025\,X_0^{t-2} + \varepsilon_0 \\
X_1^t &= 0.5\,(X_0^{t-2})^2 + \varepsilon_1 \\
X_2^t &= -0.4\,X_0^{t-3} + \varepsilon_2 \\
X_3^t &= 0.25c\,X_3^{t-1} - 0.5\,(X_0^{t-2})^2 + 0.25c\,X_4^{t-1} + \varepsilon_3 \\
X_4^t &= 0.25c\,X_4^{t-1} - 0.25c\,X_3^{t-1} + \varepsilon_4
\end{align*}
\end{minipage}
\hfill
\begin{minipage}[t]{0.42\textwidth}
\centering
\vspace{1em}
\includegraphics[height=4cm]{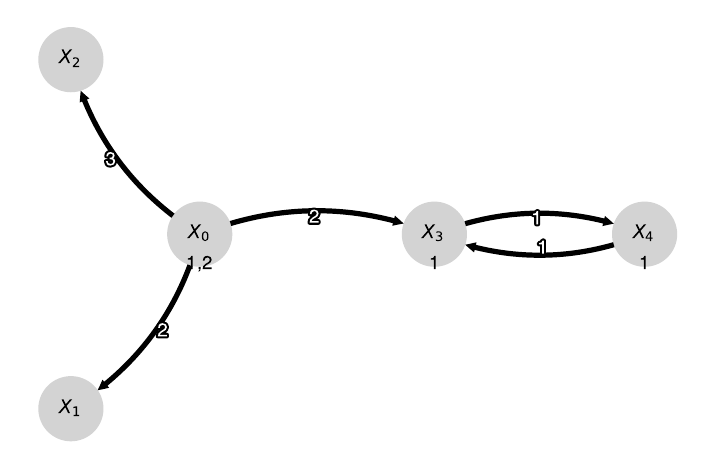}
\end{minipage}

\vspace{1em}

\paragraph{Confounded Trends, 7-Node.}
Shared quadratic trend $\tau^t = 10 \cdot (t/T)^2$, $\varepsilon_i \sim \mathcal{N}(0, 0.09)$.
Only true cross-variable edge: $X_5 \to X_6$ at lag~2; all others confounded by $\tau$.
\cedar{}, \cdnots{}, \cdnots{}+ use \texttt{include\_C=True, c\_preset=linear+quad}; PCMCI+ lacks C-node support.

\noindent
\begin{minipage}[t]{0.38\textwidth}
\small
\begin{align*}
X_0^t &= \tau^t + \varepsilon_0 \\
X_1^t &= 0.9\,\tau^t + \varepsilon_1 \\
X_2^t &= 0.5\,X_2^{t-1} + 0.85\,\tau^t + \varepsilon_2 \\
X_3^t &= 0.8\,\tau^t + \varepsilon_3 \\
X_4^t &= 0.6\,X_4^{t-1} + 0.75\,\tau^t + \varepsilon_4 \\
X_5^t &= 0.5\,X_5^{t-1} + \varepsilon_5 \\
X_6^t &= 0.4\,X_6^{t-1} + 0.6\,X_5^{t-2} + \varepsilon_6
\end{align*}
\end{minipage}
\hfill
\begin{minipage}[t]{0.58\textwidth}
\centering
\vspace{0.5em}
\includegraphics[height=4.5cm]{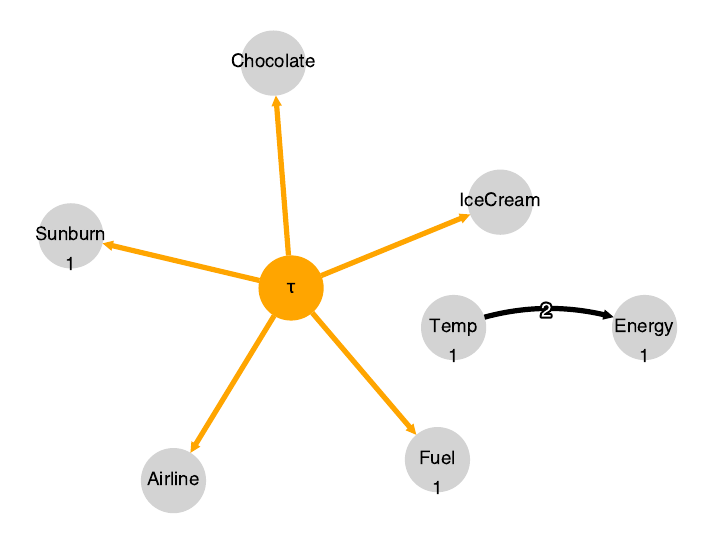}
\end{minipage}

\vspace{0.5em}

\cedar{} matches all baselines on the cascade (F1$=$0.944) and achieves perfect F1 on confounded trends---PCMCI+ degrades to F1$=$0.35 without trend handling.
On the linear benchmark, \cedar{} trades recall for precision (1.00 precision, 0.69 recall).
Figure~\ref{fig:compare} shows \cedar{}'s discovered graphs on all four benchmarks.

\begin{figure}[h]
\centering
\begin{subfigure}[t]{0.48\textwidth}
\centering
\includegraphics[height=5.5cm]{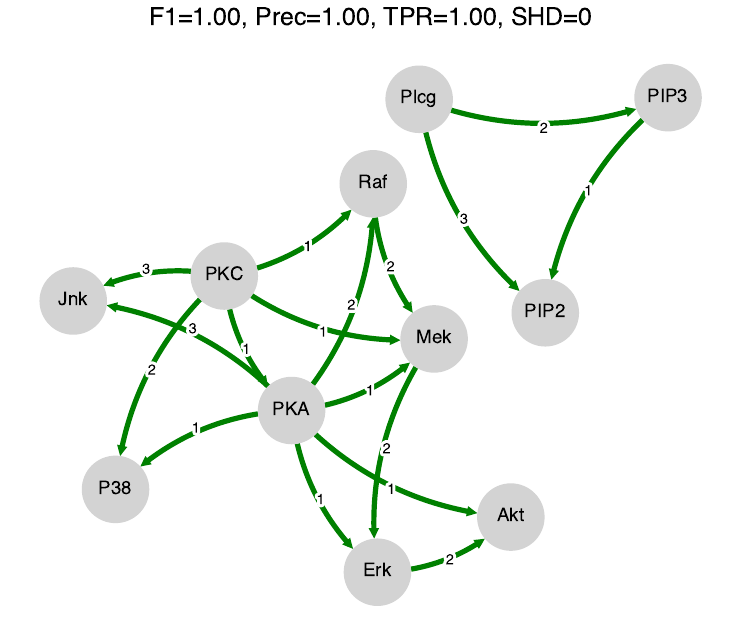}
\caption{Cascade Network (11-Node)}
\end{subfigure}
\hfill
\begin{subfigure}[t]{0.48\textwidth}
\centering
\includegraphics[height=5.5cm]{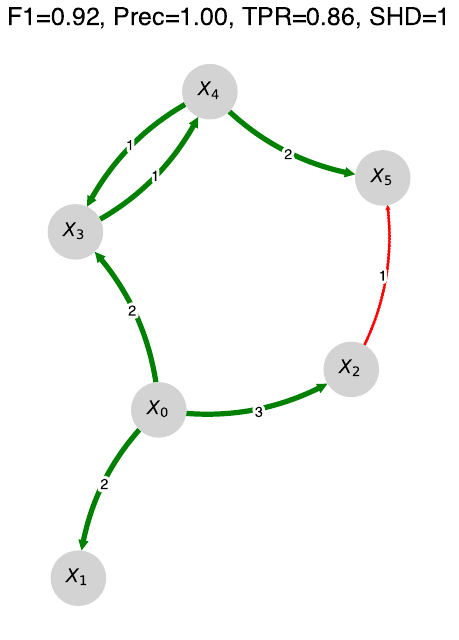}
\caption{6-Node Linear VAR}
\end{subfigure}

\vspace{0.5em}

\begin{subfigure}[t]{0.48\textwidth}
\centering
\includegraphics[height=5.5cm]{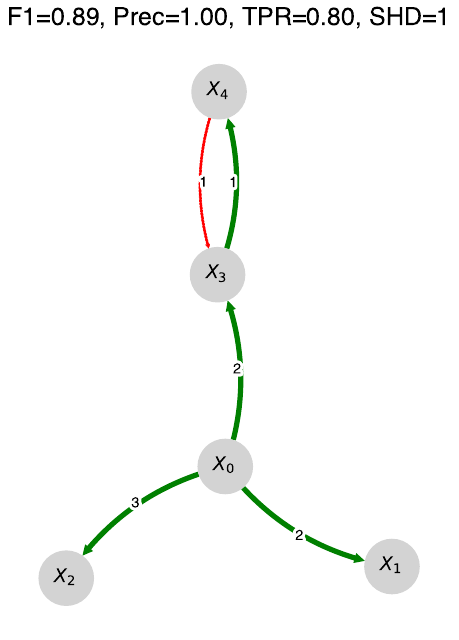}
\caption{5-Node Nonlinear System}
\end{subfigure}
\hfill
\begin{subfigure}[t]{0.48\textwidth}
\centering
\includegraphics[height=5.5cm]{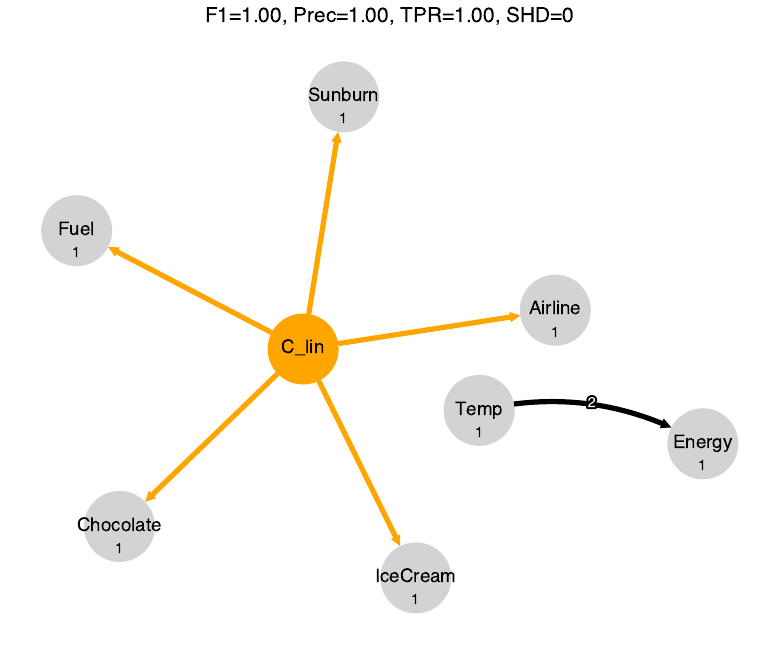}
\caption{Confounded Trends (7-Node, \texttt{c\_preset=linear} for visual clarity; Table~\ref{tab:main} uses \texttt{linear+quad})}
\end{subfigure}
\caption{\cedar{}'s discovered graph vs.\ ground truth (seed 42). Green: TP, red dashed: FN, orange: FP.}
\label{fig:compare}
\end{figure}

\begin{table}[h]
\centering
\caption{Named benchmarks (mean$\pm$s.e.\ over 5 seeds). All methods use partial correlation at $\alpha=0.01$ except the 5-node nonlinear benchmark, which uses SplitKCI. Metrics report cross-variable edges only (self-loops excluded); \cedar{} assumes AR(1) self-dynamics by design and adds self-loops for all variables without validation, so including them would count as false positives for variables without true self-lags. Confounded-trends: \cedar{}, \cdnots{}, and \cdnots{}+ use \texttt{include\_C=True}, \texttt{c\_preset=linear+quad}; PCMCI+ has no C-node support.}
\label{tab:main}
\small
\begin{tabular}{l ccc ccc}
\toprule
& \multicolumn{3}{c}{\textbf{Cascade Network} (11-node, $T{=}300$)} & \multicolumn{3}{c}{\textbf{6-Node Linear VAR} ($T{=}300$)} \\
\cmidrule(lr){2-4} \cmidrule(lr){5-7}
\textbf{Method} & F1 & Prec & Rec & F1 & Prec & Rec \\
\midrule
\textbf{\cedar{}} & \textbf{.944}{\tiny$\pm$.03} & .975{\tiny$\pm$.03} & \textbf{.918}{\tiny$\pm$.04} & .809{\tiny$\pm$.04} & \textbf{1.00}{\tiny$\pm$.00} & .686{\tiny$\pm$.05} \\
\cdnots{}         & .826{\tiny$\pm$.01} & .920{\tiny$\pm$.03} & .753{\tiny$\pm$.01} & \textbf{.923}{\tiny$\pm$.00} & \textbf{1.00}{\tiny$\pm$.00} & \textbf{.857}{\tiny$\pm$.00} \\
\cdnots{}+        & .933{\tiny$\pm$.02} & .963{\tiny$\pm$.03} & .906{\tiny$\pm$.02} & .910{\tiny$\pm$.01} & .971{\tiny$\pm$.03} & \textbf{.857}{\tiny$\pm$.00} \\
PCMCI+            & \textbf{.944}{\tiny$\pm$.02} & \textbf{.987}{\tiny$\pm$.01} & .906{\tiny$\pm$.02} & .897{\tiny$\pm$.02} & .943{\tiny$\pm$.04} & \textbf{.857}{\tiny$\pm$.00} \\
\bottomrule
\end{tabular}

\vspace{0.8em}

\begin{tabular}{l ccc ccc}
\toprule
& \multicolumn{3}{c}{\textbf{5-Node Nonlinear} ($T{=}300$, SplitKCI)} & \multicolumn{3}{c}{\textbf{Confounded Trends} (7-node, $T{=}500$, with C)} \\
\cmidrule(lr){2-4} \cmidrule(lr){5-7}
\textbf{Method} & F1 & Prec & Rec & F1 & Prec & Rec \\
\midrule
\textbf{\cedar{}} & .844{\tiny$\pm$.04} & .950{\tiny$\pm$.05} & \textbf{.760}{\tiny$\pm$.04} & \textbf{1.00}{\tiny$\pm$.00} & \textbf{1.00}{\tiny$\pm$.00} & \textbf{1.00}{\tiny$\pm$.00} \\
\cdnots{}         & .662{\tiny$\pm$.04} & .950{\tiny$\pm$.05} & .520{\tiny$\pm$.05} & .900{\tiny$\pm$.10} & .867{\tiny$\pm$.13} & \textbf{1.00}{\tiny$\pm$.00} \\
\cdnots{}+        & \textbf{.861}{\tiny$\pm$.03} & \textbf{1.00}{\tiny$\pm$.00} & \textbf{.760}{\tiny$\pm$.04} & .800{\tiny$\pm$.08} & .700{\tiny$\pm$.12} & \textbf{1.00}{\tiny$\pm$.00} \\
PCMCI+            & .833{\tiny$\pm$.04} & \textbf{1.00}{\tiny$\pm$.00} & .720{\tiny$\pm$.05} & .348{\tiny$\pm$.05} & .215{\tiny$\pm$.04} & \textbf{1.00}{\tiny$\pm$.00} \\
\bottomrule
\end{tabular}
\end{table}

\section{Full Scaling Results}
\label{sec:full_results}

Figure~\ref{fig:shd_vs_d} reports SHD across dimensionalities (lower is better); Table~\ref{tab:full_scaling} reports F1 scores (mean and standard deviation over 20 seeds) for all sample sizes tested.

\begin{figure}[h]
\centering
\includegraphics[width=\textwidth]{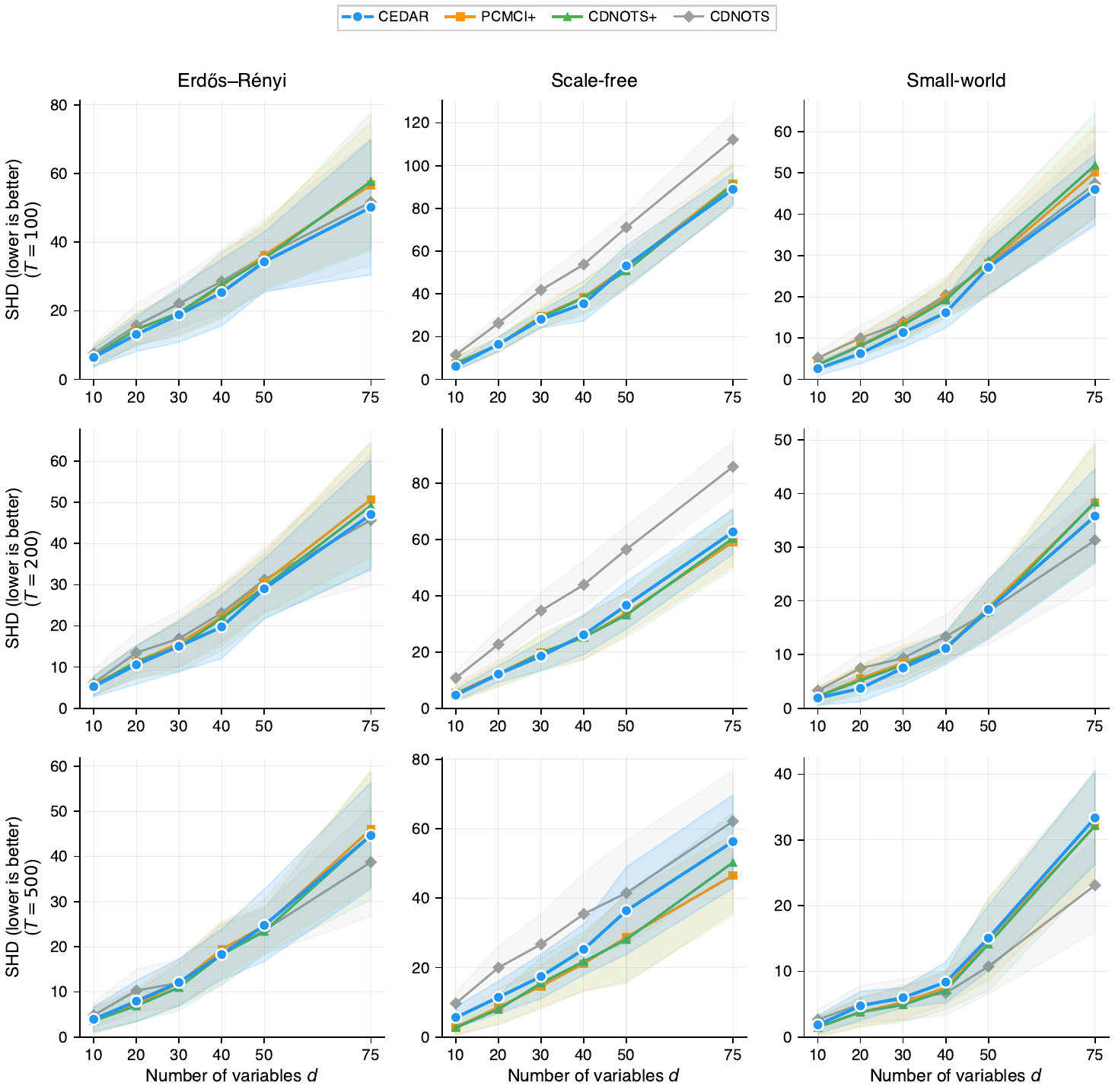}
\caption{Structural Hamming distance (lower is better) vs.\ dimensionality $d$ for three graph topologies and three sample sizes. Same layout as Figure~\ref{fig:f1_vs_d}.}
\label{fig:shd_vs_d}
\end{figure}

\begin{table}[h]
\centering
\caption{Runtime in seconds ($T=300$, mean over 20 seeds $\times$ 3 topologies, ParCorr on NVIDIA L4 GPU). All methods use the same CI test. The difference reflects conditioning-set size, not per-test cost.}
\label{tab:runtime}
\small
\begin{tabular}{l rrrrrr}
\toprule
\textbf{Method} & $d=10$ & $d=20$ & $d=30$ & $d=40$ & $d=50$ & $d=75$ \\
\midrule
\textbf{\cedar{}}  & \textbf{0.21} & \textbf{0.73} & \textbf{1.48} & \textbf{2.72} & \textbf{3.86} & \textbf{8.04} \\
\cdnots{}          & 0.21 & 0.84 & 1.69 & 3.68 & 4.32 & 10.57 \\
\cdnots{}+         & 0.33 & 1.18 & 2.39 & 5.26 & 5.98 & 15.86 \\
PCMCI+             & 0.42 & 1.48 & 2.99 & 6.53 & 7.48 & 19.37 \\
\bottomrule
\end{tabular}
\end{table}

\begin{sidewaystable}
\centering
\caption{F1 on random SCPs (mean(sd) over 20 seeds) for all sample sizes. Best per column in \textbf{bold}. Topologies: Erd\H{o}s--R\'enyi (sparse), scale-free (BA, $m{=}2$), small-world (WS, $k{=}3$).}
\label{tab:full_scaling}
\scriptsize
\setlength{\tabcolsep}{3pt}
\begin{tabular}{l cccccc cccccc cccccc}
\toprule
& \multicolumn{6}{c}{\textbf{Erd\H{o}s--R\'enyi}} & \multicolumn{6}{c}{\textbf{Scale-free}} & \multicolumn{6}{c}{\textbf{Small-world}} \\
\cmidrule(lr){2-7} \cmidrule(lr){8-13} \cmidrule(lr){14-19}
& $d{=}10$ & $d{=}20$ & $d{=}30$ & $d{=}40$ & $d{=}50$ & $d{=}75$ & $d{=}10$ & $d{=}20$ & $d{=}30$ & $d{=}40$ & $d{=}50$ & $d{=}75$ & $d{=}10$ & $d{=}20$ & $d{=}30$ & $d{=}40$ & $d{=}50$ & $d{=}75$ \\
\midrule
\multicolumn{19}{l}{\textit{$T = 100$}} \\
\textbf{\cedar{}} & \textbf{.63}(.15) & \textbf{.65}(.09) & \textbf{.67}(.08) & \textbf{.64}(.08) & \textbf{.59}(.07) & \textbf{.56}(.09) & \textbf{.77}(.09) & .73(.07) & \textbf{.70}(.05) & \textbf{.73}(.07) & .67(.07) & .63(.04) & \textbf{.85}(.11) & \textbf{.82}(.08) & \textbf{.79}(.08) & \textbf{.78}(.06) & \textbf{.71}(.08) & \textbf{.68}(.06) \\
\cdnots{}+ & .62(.19) & .62(.07) & .66(.05) & .62(.08) & .58(.06) & .53(.08) & .72(.12) & \textbf{.73}(.06) & .69(.06) & .71(.07) & \textbf{.69}(.06) & \textbf{.64}(.04) & .79(.09) & .77(.08) & .76(.08) & .74(.06) & \textbf{.70}(.09) & .66(.07) \\
PCMCI+ & \textbf{.63}(.19) & .61(.07) & .66(.05) & .62(.07) & .57(.07) & .53(.08) & .71(.12) & \textbf{.74}(.07) & .69(.06) & .70(.07) & .68(.06) & \textbf{.63}(.04) & .79(.10) & .77(.08) & .75(.07) & .74(.06) & \textbf{.71}(.08) & .67(.06) \\
\cdnots{} & .55(.16) & .57(.11) & .58(.09) & .57(.09) & .56(.07) & .53(.08) & .51(.08) & .50(.08) & .50(.07) & .52(.08) & .50(.05) & .49(.06) & .67(.12) & .70(.10) & .73(.09) & .71(.06) & .70(.07) & .67(.07) \\
\midrule
\multicolumn{19}{l}{\textit{$T = 200$}} \\
\textbf{\cedar{}} & \textbf{.72}(.11) & \textbf{.73}(.10) & .75(.06) & \textbf{.74}(.08) & .67(.07) & \textbf{.62}(.06) & \textbf{.84}(.08) & .82(.05) & \textbf{.83}(.05) & .82(.05) & .80(.05) & .78(.03) & \textbf{.89}(.08) & \textbf{.90}(.06) & \textbf{.87}(.06) & \textbf{.86}(.04) & .82(.05) & .78(.05) \\
\cdnots{}+ & .71(.13) & .72(.08) & \textbf{.76}(.05) & .71(.08) & \textbf{.67}(.07) & .61(.07) & .83(.09) & \textbf{.83}(.06) & .82(.06) & \textbf{.83}(.06) & \textbf{.82}(.05) & \textbf{.79}(.04) & .87(.11) & .86(.08) & .86(.05) & \textbf{.86}(.04) & \textbf{.83}(.04) & .77(.06) \\
PCMCI+ & .70(.15) & .72(.08) & .74(.05) & .71(.07) & .66(.06) & .60(.07) & .82(.11) & .82(.06) & .82(.05) & \textbf{.83}(.06) & \textbf{.82}(.04) & \textbf{.79}(.03) & .87(.11) & .85(.07) & .86(.06) & \textbf{.86}(.04) & .82(.05) & .77(.06) \\
\cdnots{} & .67(.17) & .65(.08) & .70(.07) & .67(.08) & .64(.06) & .61(.07) & .57(.10) & .60(.11) & .62(.07) & .65(.08) & .65(.06) & .66(.03) & .80(.11) & .79(.08) & .83(.06) & .82(.06) & .81(.06) & \textbf{.80}(.05) \\
\midrule
\multicolumn{19}{l}{\textit{$T = 300$}} \\
\textbf{\cedar{}} & \textbf{.78}(.12) & .78(.09) & \textbf{.79}(.07) & \textbf{.74}(.07) & \textbf{.70}(.07) & .63(.07) & .88(.09) & .86(.05) & .83(.04) & .84(.05) & .82(.05) & .81(.03) & .91(.08) & \textbf{.92}(.06) & \textbf{.90}(.05) & .88(.04) & .85(.04) & .81(.05) \\
\cdnots{}+ & .77(.12) & \textbf{.79}(.07) & .79(.05) & \textbf{.74}(.08) & .69(.07) & .63(.07) & \textbf{.90}(.08) & \textbf{.86}(.06) & \textbf{.85}(.06) & \textbf{.86}(.05) & \textbf{.85}(.04) & .82(.03) & \textbf{.92}(.06) & .89(.07) & .90(.04) & \textbf{.89}(.05) & .85(.04) & .81(.04) \\
PCMCI+ & .76(.13) & \textbf{.79}(.07) & .78(.05) & \textbf{.74}(.07) & .68(.07) & .62(.07) & \textbf{.89}(.08) & \textbf{.86}(.07) & \textbf{.84}(.06) & \textbf{.86}(.05) & \textbf{.85}(.04) & \textbf{.83}(.03) & \textbf{.92}(.06) & .88(.07) & .89(.05) & \textbf{.89}(.05) & .85(.04) & .81(.04) \\
\cdnots{} & .70(.15) & .71(.09) & .73(.07) & .71(.09) & .68(.07) & \textbf{.64}(.08) & .61(.11) & .64(.08) & .66(.09) & .69(.10) & .71(.07) & .72(.04) & .82(.10) & .82(.08) & .87(.06) & .86(.06) & \textbf{.86}(.03) & \textbf{.83}(.04) \\
\midrule
\multicolumn{19}{l}{\textit{$T = 500$}} \\
\textbf{\cedar{}} & .81(.10) & .82(.08) & .81(.06) & .77(.06) & .74(.06) & .65(.07) & .81(.10) & .84(.07) & .85(.05) & .84(.04) & .82(.06) & .82(.04) & .90(.07) & .88(.05) & .90(.03) & .90(.03) & .86(.04) & .81(.04) \\
\cdnots{}+ & \textbf{.84}(.12) & \textbf{.84}(.07) & \textbf{.83}(.06) & \textbf{.77}(.07) & \textbf{.75}(.06) & .67(.06) & \textbf{.91}(.07) & \textbf{.89}(.06) & .87(.05) & \textbf{.87}(.05) & \textbf{.86}(.05) & .85(.04) & \textbf{.92}(.08) & \textbf{.90}(.06) & \textbf{.92}(.04) & \textbf{.92}(.03) & .87(.06) & .82(.04) \\
PCMCI+ & .83(.13) & .84(.08) & .81(.05) & .76(.07) & .74(.06) & .66(.06) & \textbf{.91}(.07) & .89(.06) & \textbf{.88}(.05) & \textbf{.87}(.05) & \textbf{.86}(.05) & \textbf{.85}(.03) & \textbf{.92}(.08) & \textbf{.90}(.06) & .92(.04) & \textbf{.91}(.03) & .87(.05) & .82(.04) \\
\cdnots{} & .75(.14) & .75(.09) & .80(.05) & .76(.08) & .73(.06) & \textbf{.68}(.08) & .64(.12) & .68(.10) & .74(.09) & .75(.09) & .77(.09) & .78(.05) & .85(.10) & .87(.08) & .90(.06) & \textbf{.92}(.04) & \textbf{.90}(.04) & \textbf{.86}(.04) \\
\midrule
\multicolumn{19}{l}{\textit{$T = 1000$}} \\
\textbf{\cedar{}} & .85(.09) & .85(.06) & .84(.07) & .79(.07) & .76(.07) & .68(.09) & .78(.10) & .79(.07) & .79(.09) & .80(.05) & .79(.08) & .79(.04) & .92(.07) & .89(.07) & .89(.04) & .88(.04) & .84(.04) & .79(.04) \\
\cdnots{}+ & \textbf{.89}(.09) & \textbf{.87}(.06) & \textbf{.87}(.05) & \textbf{.82}(.07) & \textbf{.80}(.07) & .71(.07) & \textbf{.93}(.06) & \textbf{.88}(.05) & \textbf{.87}(.06) & .86(.04) & \textbf{.86}(.06) & .82(.04) & \textbf{.94}(.06) & \textbf{.92}(.05) & \textbf{.94}(.03) & .92(.03) & .88(.03) & .82(.03) \\
PCMCI+ & .88(.10) & .86(.07) & \textbf{.87}(.06) & .81(.07) & .80(.07) & .70(.08) & .92(.06) & .87(.05) & \textbf{.87}(.07) & \textbf{.86}(.04) & \textbf{.86}(.05) & \textbf{.83}(.03) & \textbf{.94}(.06) & \textbf{.92}(.06) & .93(.03) & .91(.03) & .88(.03) & .82(.02) \\
\cdnots{} & .79(.13) & .82(.08) & .84(.04) & \textbf{.82}(.06) & .79(.06) & \textbf{.73}(.07) & .70(.10) & .75(.09) & .78(.10) & .80(.08) & .81(.08) & .81(.05) & .88(.09) & .90(.07) & .92(.05) & \textbf{.92}(.04) & \textbf{.91}(.03) & \textbf{.87}(.04) \\
\bottomrule
\end{tabular}
\end{sidewaystable}

\section{Ablation Across Graph Topologies}
\label{sec:ablation_topologies}

Table~\ref{tab:ablation_topologies} extends the ablation of Table~\ref{tab:ablation} to three random graph topologies: Erd\H{o}s--R\'{e}nyi (ER), scale-free (SF), and small-world (SW), at $d{=}20$, $T{=}200$, mean over 10 seeds.
The non-monotone pattern and the load-bearing role of MCI pruning and AR(1)-residualized dcor hold consistently across all three topologies.
The baseline uses top-1 lag selection without significance thresholding, mimicking \sypi{}'s Lasso-based approach.

\begin{table}[h]
\centering
\caption{Incremental ablation across three graph topologies ($d{=}20$, $T{=}200$, $\alpha{=}0.01$, mean F1 over 10 seeds). MCI pruning and AR(1)-residualized dcor are load-bearing across all topologies.}
\label{tab:ablation_topologies}
\small
\begin{tabular}{l ccc}
\toprule
\textbf{Configuration} & ER & Scale-free & Small-world \\
\midrule
Baseline (\sypi{}-style)          & .655 & .524 & .658 \\
\quad + $\mathcal{U}$-centered dcor & .672 & .521 & .675 \\
\quad + Analytic $t$-test          & .682 & .531 & .674 \\
\quad + Multi-lag                  & .681 & .481 & .665 \\
\quad + MCI pruning                & .775 & .761 & .777 \\
\quad + AR(1)-res.\ dcor (\textbf{\cedar{}}) & \textbf{.773} & \textbf{.855} & \textbf{.784} \\
\bottomrule
\end{tabular}
\end{table}

\section{Robustness Under Assumption Violations}
\label{sec:stress_tests}

Table~\ref{tab:stress} reports \cedar{} and \cdnots{}+ performance on three stress tests that deliberately violate \cedar{}'s key assumptions ($T{=}300$, mean over 10 seeds).

\begin{table}[h]
\centering
\caption{Stress tests under assumption violations ($d{=}15$--$20$, $T{=}300$, mean over 10 seeds). The gap between \cedar{} and \cdnots{}+ grows as violations accumulate, but \cedar{} degrades predictably rather than catastrophically.}
\label{tab:stress}
\small
\begin{tabular}{l l ccccc}
\toprule
\textbf{Violation} & \textbf{Method} & TP & FP & FN & Prec & F1 \\
\midrule
\multirow{2}{*}{A7 bypass $\times$5 ($d{=}15$, 15 edges)} & \cedar{} & 14.4 & 1.8 & 0.6 & .893 & .924 \\
 & \cdnots{}+ & 15.0 & 1.1 & 0.0 & .934 & .965 \\
\midrule
\multirow{2}{*}{Dense AR(2) ($d{=}15$, 18 edges, A3)} & \cedar{} & 16.2 & 2.4 & 1.8 & .875 & .886 \\
 & \cdnots{}+ & 17.9 & 1.3 & 0.1 & .934 & .963 \\
\midrule
\multirow{2}{*}{Mixed A3+A5+A7 ($d{=}20$, 27 edges)} & \cedar{} & 22.4 & 2.4 & 4.6 & .905 & .865 \\
 & \cdnots{}+ & 27.0 & 1.2 & 0.0 & .958 & .979 \\
\bottomrule
\end{tabular}
\end{table}

The gap between \cedar{} and \cdnots{}+ grows as violations accumulate: from 0.041 F1 points (single violation type, A7 bypasses) to 0.114 F1 points (all three violations combined in $d{=}20$).
\cedar{} degrades predictably rather than catastrophically---F1 ranges from 0.865 to 0.924 even under simultaneous violations---reflecting the bounded nature of its structural assumptions.
\cdnots{}+, with its iterative PCMCI+-style conditioning, is more robust to these violations at the cost of requiring larger $T$ and more CI tests.

\section{\sypi{}-All-Targets Comparison}
\label{sec:sypi_comparison}

Table~\ref{tab:sypi_comparison} compares \cedar{} against a faithful \sypi{}-all-targets adaptation---the closest ancestor method---to directly establish the contribution of each component.
\sypi{}-all-targets runs \cedar{}'s two-condition CI test for every variable as target, using LassoCV (cv=5) for lag selection with the selected lag taken as the argmax of absolute Lasso coefficients (matching \citet{mastakouri2021sypi}, Section~6.5), $\alpha_1{=}\alpha_2{=}0.05$ (from the paper's threshold sweep), and no MCI pruning.
CEDAR-no-MCI uses all \cedar{} components except MCI pruning, establishing the independent contribution of pruning.

\begin{table}[h]
\centering
\caption{\sypi{}-all-targets vs.\ \cedar{} (mean F1 over 10 seeds, $\max\_\text{lag}{=}3$, ParCorr).
\sypi{}-all-targets uses LassoCV lag selection, $\alpha_1{=}\alpha_2{=}0.05$, no MCI pruning.
CEDAR-no-MCI uses AR(1)-residualized dcor, $\alpha{=}0.01$, no MCI pruning.
Full \cedar{} adds stable MCI pruning.
Mean precision of \sypi{}-all-targets is 0.34; of Full \cedar{} is 0.87.}
\label{tab:sypi_comparison}
\small
\begin{tabular}{l ccc}
\toprule
\textbf{Benchmark} & \sypi{}-all-targets & CEDAR-no-MCI & \textbf{\cedar{}} \\
\midrule
ER $d{=}20$, $T{=}100$  & .439 & .612 & \textbf{.625} \\
ER $d{=}20$, $T{=}200$  & .498 & .694 & \textbf{.773} \\
ER $d{=}20$, $T{=}500$  & .502 & .705 & \textbf{.856} \\
\midrule
SF $d{=}20$, $T{=}100$  & .432 & .648 & \textbf{.781} \\
SF $d{=}20$, $T{=}200$  & .436 & .594 & \textbf{.855} \\
SF $d{=}20$, $T{=}500$  & .397 & .503 & \textbf{.847} \\
\midrule
SF $d{=}40$, $T{=}200$  & .373 & .536 & \textbf{.830} \\
\midrule
Cascade ($d{=}11$), $T{=}200$ & .609 & .851 & \textbf{.887} \\
\bottomrule
\end{tabular}
\end{table}

\paragraph{Interpretation.}
\sypi{}-all-targets achieves consistently high recall (0.64--0.86) because Lasso finds candidate lags, but suffers low precision (0.24--0.53) because it does not residualize the AR(1) component before scoring lags, causing the target's autocorrelation to inflate cross-variable importance scores.
The gap between CEDAR-no-MCI and \sypi{}-all-targets reflects the contribution of AR(1)-residualized dcor: even without MCI pruning, better lag screening yields substantially higher F1.
The further gap between CEDAR-no-MCI and Full \cedar{} reflects MCI pruning, which removes indirect edges that survive the two-condition test---this is the dominant gain at larger $d$ and $T$.
Crucially, the F1 advantage of \cedar{} over \sypi{}-all-targets grows with $d$ (0.186 at $d{=}20$ ER vs.\ 0.457 at $d{=}40$ SF), confirming that both components scale favorably.

\section{Elbe River Benchmark Details}
\label{sec:elbe_details}

\paragraph{Dataset and challenges.}
The Elbe River benchmark from CausalRivers~\citep{stein2025causalrivers} consists of 12 gauging stations along the Elbe main branch, resampled to 3-hour resolution ($T{=}11{,}688$, $d{=}12$, 5.3\% missing values handled by linear interpolation).
The dataset is challenging for three reasons:
(i)~\textbf{Nonstationarity}: hydrological regimes change the causal lag structure---flood-season propagation is fast (lag 1--2) while dry-season propagation is slow (lag 5+).
(ii)~\textbf{Transient confounding}: rainfall simultaneously raises discharge at multiple stations, creating spurious correlations that violate causal sufficiency; this effect is weakest during low-flow (i.e., dry season) conditions where baseflow propagation dominates.
(iii)~\textbf{Variable propagation speed}: the causal lag varies across station pairs (distance-dependent) and across regimes (flowrate-dependent).

\paragraph{Setup.}
We apply PELT changepoint detection on the mean discharge signal with minimum segment size 300 samples and segment threshold 0.25.
\cedar{} is run independently on each of the three regimes with lag bounds $L{=}8,5,3$ (low/normal/high-flow), AR(1)-relative lag filtering ($\alpha_{\text{lag}}{=}0.97$), $\alpha{=}0.01$, and ParCorr as the CI test; per-regime edges are combined by OR aggregation.
The regime segmentation is visualized in Figure~\ref{fig:elbe_regimes}.
\cdnots{}+ is run with the same regime-conditional setup for comparison.

\begin{figure}[h]
\centering
\includegraphics[width=\textwidth]{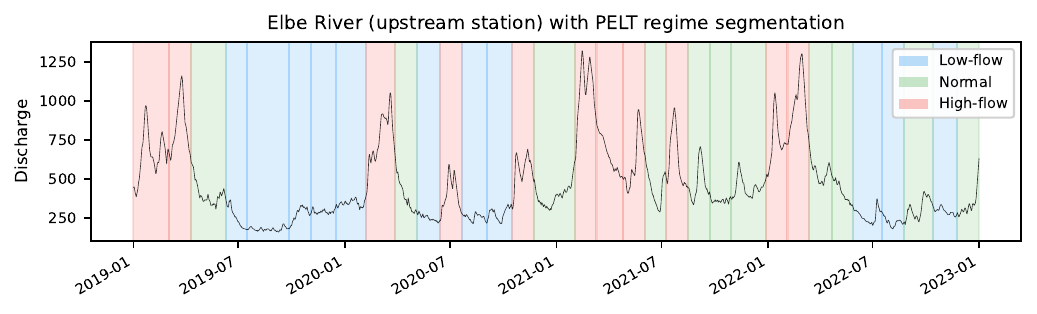}
\caption{Elbe River discharge (upstream station, S166) with PELT-detected regime segmentation into three hydrological regimes. Low-flow periods provide the cleanest causal signal due to reduced transient rainfall confounding.}
\label{fig:elbe_regimes}
\end{figure}

\begin{table}[h]
\centering
\caption{Elbe River benchmark ($d{=}12$, $T{=}11{,}688$, 11 true edges). Regime-conditional discovery substantially outperforms single-run baselines. Low-flow yields the highest precision for \cedar{}, consistent with reduced transient confounding.}
\label{tab:elbe}
\small
\begin{tabular}{l ccccc c}
\toprule
\textbf{Method} & TP & FP & FN & Prec & Rec & F1 \\
\midrule
\cedar{} (single, $L{=}5$) & 0 & 14 & 11 & .000 & .000 & .000 \\
\cdnots{}+ (single, $L{=}5$) & 5 & 8 & 6 & .385 & .455 & .417 \\
\midrule
\cedar{} + PELT (agg.) & \textbf{10} & 3 & \textbf{1} & .769 & \textbf{.909} & \textbf{.833} \\
\quad low-flow only & 9 & \textbf{1} & 2 & \textbf{.900} & .818 & .857 \\
\quad normal only & 5 & 2 & 6 & .714 & .455 & .556 \\
\quad high-flow only & 4 & 3 & 7 & .571 & .364 & .444 \\
\midrule
\cdnots{}+ + PELT (agg.) & 9 & 4 & 2 & .692 & .818 & .750 \\
\quad low-flow only & 8 & 3 & 3 & .727 & .727 & .727 \\
\bottomrule
\end{tabular}
\end{table}

The per-regime breakdown confirms the transient confounder hypothesis: the low-flow regime achieves 0.900 precision (9 TP, 1 FP) compared to 0.571 for high-flow.
The single missed edge in the aggregated \cedar{} result is between two distant stations where the propagation lag exceeds all three regime-specific bounds.

\end{document}